\newcommand{\ip}[1]{\langle#1\rangle}
\newcommand{\norm}[1]{\left\| #1\right\|}
\newcommand{\normsm}[1]{\| #1\|}
\newcommand{\eqdef}{\stackrel{\cdot}{=}}
\newcommand{\todoc}[2][]{\todo[size=\scriptsize,color=blue!20!white,#1]{Csaba: #2}}
\newcommand{\todoch}[2][]{\todo[size=\scriptsize,color=red!20!white,#1]{Chandru: #2}}
\newcommand{\dcd}{d \times d}
\newcommand{\B}{\mathcal{B}}
\newcommand{\V}{\mathcal{V}}
\newcommand{\nn}{\nonumber}
\newcommand{\cond}[1]{\kappa(#1)}
\newcommand{\md}[1]{\left|#1\right|}
\newcommand{\rhod}[1]{\rho_d(\alpha,#1)}
\newcommand{\rhos}[1]{\rho_s(\alpha,#1)}
\newcommand{\ra}{\rightarrow}
\renewcommand{\P}{\mathcal{P}}
\newcommand{\E}{\mathbf{E}}
\newcommand{\F}{\mathcal{F}}
\newcommand{\R}{\mathbb{R}}
\newcommand{\EE}[1]{\mathbf{E}\left[#1\right]}
\newcommand{\EEP}[1]{\mathbf{E}_P\left[#1\right]}
\newcommand{\gln}{\mathrm{GL}(d)}
\newcommand{\gld}{\mathrm{GL}(d)}
\newcommand{\I}{\mathcal{I}} 
\newcommand{\C}{\mathbb{C}}
\newcommand{\re}[1]{\emph{re}(#1)}
\newcommand{\im}[1]{\emph{im}(#1)}
\newcommand{\op}{\oplus}
\newcommand{\tL}{\tilde{\Lambda}}
\newcommand{\ts}{\theta_*}
\newcommand{\tb}{\bar{\theta}}
\newcommand{\zh}{\hat{z}}
\newcommand{\eh}{\hat{e}}
\newcommand{\thh}{\hat{\theta}}
\newcommand{\gh}{\hat{\gamma}}
\newcommand{\iid}{\emph{i.i.d.}\xspace}
\theoremstyle{definition}
\newtheorem{theorem}{Theorem}
\newtheorem{example}{Example}
\newtheorem{definition}{Definition}
\newtheorem{lemma}{Lemma}
\newtheorem{proposition}{Proposition}
\newtheorem{assumption}{Assumption}
\crefname{assumption}{Assumption}{Assumption}
\title{Linear Stochastic Approximation: Constant Step-Size and Iterate Averaging}
\author{
Chandrashekar Lakshminarayanan and Csaba Szepesv\'ari,\\
University of Alberta\\
\{chandrurec5,csaba.szepesvari\}@gmail.com 
}
\begin{document}

\maketitle
\begin{abstract}
We consider $d$-dimensional linear stochastic approximation algorithms (LSAs) with a constant step-size and the so called Polyak-Ruppert (PR) averaging of iterates.  LSAs are widely applied in machine learning and reinforcement learning (RL), where the aim is to compute an appropriate $\ts\in \R^d$ (that is an optimum or a fixed point) using noisy data and $O(d)$ updates per iteration. In this paper, we are motivated by the problem (in RL) of policy evaluation from experience replay using the \emph{temporal difference} (TD) class of learning algorithms that are also LSAs. For LSAs with a constant step-size, and PR averaging, we provide bounds for the mean squared error (MSE) after $t$ iterations. We assume that data is \iid with finite variance (underlying distribution being $P$) and that the expected dynamics is Hurwitz. For a given LSA with PR averaging, and data distribution $P$ satisfying the said assumptions, we show that there exists a range of constant step-sizes such that its MSE decays as $O(\frac{1}{t})$.\par
We examine the conditions under which a constant step-size can be chosen uniformly for a class of data distributions $\P$, and show that not all data distributions `admit' such a uniform constant step-size.
We also suggest a heuristic step-size tuning algorithm to choose a constant step-size of a given LSA for a given data distribution $P$. We compare our results with related work and also discuss the implication of our results in the context of TD algorithms that are LSAs.
\end{abstract}

\section{Introduction}\label{sec:intro}
Linear stochastic approximation algorithms (LSAs) 
of the form
\begin{align}\label{eq:lsaintro}
\theta_t=\theta_{t-1}+\alpha_t (b_t-A_t \theta_{t-1}),
\end{align}
with $(\alpha_t)_t$ a positive step-size sequence chosen by the user and 
$(b_t,A_t)\in \R^d\times \R^{\dcd}$,  $t\geq 0$, a sequence of identically distributed random variables is widely used in
machine learning, and in particular in reinforcement learning (RL), to compute the solution of the equation 
$\E[b_t] - \E[A_t] \theta = 0$, where $\E$ stands for mathematical expectation.
Some examples of LSAs include the stochastic gradient descent algorithm (SGD) for the problem of linear least-squares estimation  (LSE) \cite{bach,bachaistats}, and the \emph{temporal difference} (TD) class of learning algorithms in RL \cite{sutton,konda-tsitsiklis,KoTsi03:LSA,gtd,gtd2,gtdmp}.
\todoc{konda-tsitsiklis reference resolved to tsitsiklis-van-roy. is this what you want?? I also added KoTsi03:LSA, maybe that's what you wanted.}

The choice of the step-size sequence $(\alpha_t)_t$ is critical for the performance of LSAs such as \eqref{eq:lsaintro}.
Informally speaking, smaller step-sizes are better for noise rejection and larger step-sizes lead to faster forgetting of initial conditions (smaller bias). At the same time, step-sizes that are too large might result in instability of \eqref{eq:lsaintro} even when $(A_t)_t$ has favourable properties. 
A useful choice has been the diminishing step-sizes \cite{gtd2,gtdmp,konda-tsitsiklis}, where $\alpha_t\ra 0$ such that $\sum_{t\geq 0} \alpha_t=\infty$. Here, $\alpha_t\to0$ circumvents the need for guessing the magnitude of step-sizes that stabilize the updates, while the second condition ensures that initial conditions are forgotten. 
An alternate idea, which we call LSA with constant step-size and Polyak-Ruppert averaging (LSA with CS-PR, in short), is to run \eqref{eq:lsaintro} by choosing $\alpha_t=\alpha>0$ $\forall t\geq 0$ with some $\alpha>0$, and output the average $\thh_t\eqdef\frac{1}{t+1}\sum_{i=0}^t \theta_i$. Thus, in LSA with CS-PR, $\theta_t$ is an internal variable and $\thh_t$ is the output of the algorithm (see \Cref{sec:prob} for a formal definition of LSA with CS-PR). The idea is that the constant step-size leads to faster forgetting of initial conditions, while the averaging on the top
reduces noise.
This idea goes back to  \citet{ruppert} and \citet{polyak-judisky} who considered it in the context of stochastic approximation that LSA is a special case of. 
\paragraph{Motivation and Contribution:} Recently, \citet{bach} considered stochastic gradient descent (SGD)\footnote{SGD is an LSA of the form in \eqref{eq:lsaintro}.} with CS-PR for LSE and \iid sampling. They showed that one can calculate a constant step-size from only a bound on the magnitude of the noisy data so that the leading term as $t\to\infty$
 in the mean-squared prediction error after $t$ updates is at most $\frac{C}{t}$ with a constant $C>0$ that depends \emph{only} on the bound on the data, the dimension $d$ and is in particular independent of the eigenspectrum of $\E[A_t]$, a property which is not shared by other step-size tunings and variations of the basic SGD method.%
 \footnote{See \cref{sec:related} for further discussion of the nature of these results.}
\todoc{Dimension?}

In this paper, we study LSAs with CS-PR (thereby extending the scope of prior work by \citet{bach} from SGD to general LSAs) \todoc{Do our results imply theirs? Do we simplify improve their bounds, while extending the scope? Even if the answer is no, we need to add remarks on these questions.}
in an effort to understand the effectiveness of the CS-PR technique beyond SGD. Our analysis for the case of general LSA does not use specific structures, and hence cannot recover entirely, the results of \citet{bach} who use the problem specific structures in their analysis.
Of particular interest is whether a similar result to that  \citet{bach} holds
for the TD class of LSA algorithms used in RL.
For simplicity, we still consider the \iid case. Our restrictions on the common distribution is that the ``noise variance'' should be bounded (as we consider squared errors), and that the matrix $\E[A_t]$ must be Hurwitz, i.e., all its eigenvalues have positive real parts. 
One setting that fits our assumption is \emph{policy evaluation} \cite{dann} using linear value function approximation from experience replay \cite{lin} in a batch setting \cite{lange} in RL using the TD class of algorithms \cite{sutton,konda-tsitsiklis,gtd,gtd2,gtdmp}. 

Our \textbf{main contributions} are as follows:
\begin{itemize}[leftmargin=*]
\item \textbf{Finite-time Instance Dependent Bounds} (\Cref{sec:mainresults}): For a given $P$, we  measure the performance of a given LSA with CS-PR in terms of the mean square error (MSE) given by $\EEP{\normsm{\thh_t-\ts}^2}$.
For the first time in the literature,
we show that (under our stated assumptions) there exists an $\alpha_P>0$ such that 
for any $\alpha\in (0,\alpha_P)$,
the MSE 
is at most $\frac{C_{P,\alpha}}{t}+\frac{C_{P',\alpha}}{t^2}$ with some positive constants $C_{P,\alpha},C_{P',\alpha}$ that we explicitly compute from $P$.
\item \textbf{Uniform Bounds} (\Cref{sec:uniform}):
It is of major interest to know whether for a given class $\P$ of distributions 
one can choose some step-size $\alpha$ 
such that $C_{P,\alpha}$ from above is uniformly bounded (i.e., replicating the result of \citet{bach}).%
\footnote{Of course, the term $C_{P',\alpha}/t^2$ needs to be controlled, as well. Just like \citet{bach}, here we focus on $C_{P,\alpha}$, which is justified if one considers the MSE as $t\to\infty$. Further justification is that we actually find a negative result. See above.}
We show via an example that in general this is not possible.
In particular, the example applies to RL, hence, we get a negative result for RL, which states that from only bounds on the data one cannot choose a step-size $\alpha$ to guarantee that $C_{P,\alpha}$ of CS-PR is uniformly bounded over $\P$.
We also define a subclass  $\P_{\text{SPD},B}$ of problems, related to SGD for LSE, that does `admit' a uniform constant step-size, thereby recovering a part of the result by \citet{bach}.
Our results in particular shed light on the precise structural assumptions that are needed 
to achieve a uniform bound for CS-PR. 
For further details, see \cref{sec:related}.
\item \textbf{Automatic Step-Size} (\Cref{sec:stepsizes}):
The above negative result implies that in RL one needs to choose the constant step-size based on properties of the instance $P$ to avoid the explosion of the MSE.
To circumvent this, we propose a natural step-size tuning method to guarantee instance-dependent boundedness.
We experimentally evaluate the proposed method and find that it is indeed able to achieve its goal on a set of synthetic examples
where no constant step-size is available to prevent exploding MSE. 
\end{itemize}
In addition to TD($0$), our results directly can be applied to other \emph{off-policy} TD algorithms such as GTD/GTD2 with CS-PR (\Cref{sec:related}). \todoc{How about TD($\lambda$)? Will we discuss this somewhere? Maybe in the organization section this can be mentioned.}
In particular, our results show that the GTD class of algorithms guarantee a $O(\frac{1}{t})$ rate for MSE (without use of projections), improving on a previous result by \citet{gtdmp} that guaranteed a $O(\frac{1}{\sqrt{t}})$ rate for this class for the projected version\footnote{Projections can be problematic since they assume knowledge of $\norm{\ts}$, which is not available in practice.} of the algorithm. \todoc{And projections are problematic on their own. Will we discuss this somewhere. Or should the projection issue be a footnote here?}
\todoc{How about Prashanth's paper? Should we mention it somewhere? The reviewers will push back if we don't.}

\section{Notations and Definitions}\label{sec:def}
We denote the sets of real and complex numbers by $\R$ and $\C$, respectively. For $x\in \C$ we denote its modulus and complex conjugate by $\md{x}$ and $\bar{x}$, respectively. We denote $d$-dimensional vector spaces over $\R$ and $\C$ by $\R^{d}$ and $\C^{d}$, respectively, and use $\R^{\dcd}$ and $\C^{\dcd}$ to denote $\dcd$ matrices with real and complex entries, respectively. We denote the transpose of $C$ by $C^\top$ and the conjugate transpose by $C^*={\bar{C}}^\top$ (and of course the same notation applies to vectors, as well). We will use $\ip{\cdot,\cdot}$ to denote the inner products: $\ip{x,y}=x^* y$. \todoc{Usually, people define this as $y^* x \ne x^* y$. It does not make a difference in terms of the math, but there could be slight differences. And we will need to be consistent.}
We use $\norm{x} = \ip{x,x}^{1/2}$ to denote the $2$-norm.
For $x\in\C^d$, we denote the general quadratic norm with respect to a positive definite (see below) Hermitian matrix $C$ (i.e., $C=C^*$) by $\norm{x}^2_C\eqdef x^*\, C \,x$.
The norm of the matrix $A$ is given by $\norm{A}\eqdef \sup_{x\in \C^d:\norm{x}=1} \norm{Ax}$.  We use $\cond{A}=\normsm{A}\normsm{A^{-1}}$ to denote the condition number of matrix $A$. We denote the identity matrix in $\C^{\dcd}$ by $\I$ and the set of invertible $\dcd$ complex matrices by $\gld$.
For a positive real number $B>0$, we define $\C^{d}_B=\{b\in \C^d\mid \norm{b}\leq B\}$ and $\C^{\dcd}_B=\{A\in \C^{\dcd}\mid \norm{A}\leq B\}$ to be the balls in $\C^d$ and $\C^{d\times d}$, respectively, of radius $B$.
We use $Z\sim P$ to denote the fact that $Z$ (which can be a number, or vector, or matrix) is distributed according to probability distribution $P$;
$\E$ denotes mathematical expectation.

Let us now state some definitions that will be useful for presenting our main results. \todoc{Actually, we should remove all definitions not needed by the main body.}
\begin{definition}\label{def:dist}
For a probability distribution $P$ over $\C^d \times \C^{d\times d}$, we let $P^V$ and $P^M$ 
denote the respective marginals of $P$ over $\C^d$ and $\C^{d\times d}$. \todoc{Changed this.}
By \emph{abusing notation} we will often write $P = (P^V,P^M)$ to mean that $P$ is a distribution with the given marginals.
Define
\begin{align*}
A_P&=\int M\, dP^M(M),\quad C_P=\int M^* M \,dP^M(M), \quad b_P=\int v\, dP^V(v)\,,\\
\rhod{P}&\eqdef {\inf}_{x\in\C^d\colon\norm{x}=1}\ip{x,\left((A_P+A_P^*)-\alpha A_P^* A_P\right)x},\\ \rhos{P}&\eqdef{\inf}_{x\in \C^d\colon\norm{x}=1}\ip{x,\left((A_P+A_P^*)-\alpha C_P\right)x}\,.
\end{align*}
\end{definition}
Note that $\rhod{P}\ge \rhos{P}$. Here, subscripts $s$ and $d$ stand for \emph{stochastic} and \emph{deterministic} respectively.  \todoc{Strict inequality?? How about $P^M$ concentrating on zero? I changed the strict inequality to non-strict.}
\todoc{Explain why we use subindex $d$ and $s$.}
\todoc{Since $\rhod$ depends on $P^M$ only, why not make it a function of $P^M$ only? Or at least add a remark?}
\begin{definition}\label{def:simdist}
Let $P=(P^V,P^M)$ as in \cref{def:dist}; $b\sim P^V$ and $A\sim \P^M$ be random variables distributed according to $P^V$ and $P^M$. For $U\in \gld$ define $P_U$ to be the  distribution of $(U^{-1}b,U^{-1}AU)$. We also let
$(P_U^V,P_U^M)$ denote the corresponding marginals. \todoc{I hope this works out.}
\end{definition}
\begin{definition}
We call a matrix $A\in \C^{\dcd}$  \emph{Hurwitz} (H) if all eigenvalues of $A$ have positive real parts. We call a matrix $A\in \C^{\dcd}$ \emph{positive definite} (PD) if $\ip{x,Ax} >0,\,\forall x\neq 0 \in \C^{d}$.  
If $\inf_x \ip{x,Ax}\ge 0$ then $A$ is \emph{positive semi-definite} (PSD).
\todoc{I think usually this is defined as $\ip{Ax,x}>0$. Same issue as with $\ip{\cdot,\cdot}$.}
We call a matrix $A\in \R^{\dcd}$ to be \emph{symmetric positive definite} (SPD) is it is symmetric i.e., $A^\top=A$ and PD. \todoc{Add note that SPD implies real.}
\end{definition}
Note that SPD implies that the underlying matrix is real.
\begin{definition}\label{distpd}
We call the distribution $P$ in \Cref{def:dist} to be H/PD/SPD if $A_P$ is H/PD/SPD.
\end{definition}
Though $\rhos{P}$ and $\rhod{P}$ depend only on $P^M$, we use $P$ instead of $P^M$ to avoid notational clutter.
\begin{example}
The matrices $\begin{bmatrix}0.1 &-1\\ 1 & 0.1\end{bmatrix}$, $\begin{bmatrix} 0.1 & 0.1 \\ 0 & 0.1\end{bmatrix}$ and $\begin{bmatrix}0.1 &0 \\ 0 & 0.1\end{bmatrix}$ are examples of H, PD and SPD matrices, respectively, and they show that while SPD implies PD, which implies H, the reverse implications do not hold.
\end{example}

\begin{definition}
Call a set of distributions $\P$ over $\C^{d}\times \C^{\dcd}$
\emph{weakly admissible} if there exists $\alpha_{\P}>0$ such that
$\rhos{P}>0$ holds for all $P\in \P$ and $\alpha\in(0,\alpha_{\P})$.
\end{definition}
\begin{definition}
Call a set of distributions $\P$ over $\C^{d}\times \C^{\dcd}$ \emph{admissible}
if there exists some $\alpha_{\P}>0$ such that $\inf_{P\in \P} \rhos{P}>0$ holds for all $\alpha\in(0,\alpha_{\P})$.
The value of $\alpha_{\P}$ is called a witness.
\end{definition}

It is easy to see that $\alpha \mapsto \rhos{P}$ is decreasing,
hence if $\alpha_{\P}>0$ witnesses that $\P$ is (weakly) admissible
then any $0<\alpha'\le \alpha_{\P}$ is also witnessing this.

\section{Problem Setup}\label{sec:prob}
We consider linear stochastic approximation algorithm (LSAs) with constant step-size (CS) and Polyak-Ruppert (PR) averaging of the iterates given as below:
\begin{subequations}\label{eq:lsa}
\begin{align}
\label{conststep}&\text{LSA:} &\theta_t&=\theta_{t-1}+\alpha(b_t-A_t\theta_{t-1})\,,\\
\label{iteravg}&\text{PR-Average:} &\thh_t&=\frac{1}{t+1}{\sum}_{i=0}^{t}\,\theta_i\,.
\end{align}
\end{subequations}
The algorithm updates a pair of parameters $\theta_t,\tb_t\in \R^{d}$ incrementally, in discrete time steps $t=1,2,\dots$
based on data $b_t\in \R^d$, $A_t\in \R^{\dcd}$. Here $\alpha>0$ is a positive step-size parameter; the only tuning parameter of the algorithm besides the
initial value $\theta_0$. The iterate $\theta_t$ is treated as an internal state of the algorithm, while $\thh_t$ is the output at time step $t$. The update of $\theta_t$ alone is considered a form of constant step-size LSA. Sometimes $A_t$ will have a special form and then the matrix-vector product $A_t \theta_{t-1}$ can also be computed in $O(d)$ time, a scenario common in reinforcement learning\cite{sutton,konda-tsitsiklis,gtd,gtd2,gtdmp}. This makes the algorithm particularly attractive in large-scale computations when $d$ is in the range of thousands, or millions, or more, as may be required by modern applications (e.g., \citep{LiMaTaBo16})
In what follows, for $t\ge1$ we make use of the $\sigma$-fields $\F_{t-1}\eqdef\{\theta_0,A_1,\ldots, A_{t-1}, b_1,\ldots, b_{t-1}\}$; $\F_{-1}$ is the trivial $\sigma$ algebra. \todoc{You wrote $\F_0$ holds all random variables, an unusual choice. And it does not work, I think and the index should be $-1$. The martingale property breaks otherwise with the first time step.}
We are interested in the behaviour of \eqref{eq:lsa} under the following assumption:
\begin{assumption}\label{assmp:lsa}
\begin{enumerate}[leftmargin=*, before = \leavevmode\vspace{-\baselineskip}]
\item \label{dist} $(b_t, A_t)\sim P$, $t\geq 0$ is an \iid sequence.
We let $A_P$ be the expectation of $A_t$, $b_P$ be the expectation of $b_t$, as in \cref{def:dist}.
\todoc{I got rid off $P^b$ and $P^A$. There were $P^V$ and $P^M$ previously.. Confusing.}
We assume that $P$ is Hurwitz.
\item \label{matvar} The martingale difference sequences\footnote{That is, $\EE{M_t|\F_{t-1}}=0$ and $\EE{N_t|\F_{t-1}}=0$ and $M_t,N_t$ are $\F_t$ measurable, $t\ge 0$.} $M_t\eqdef A_t-A_{P}$ and $N_t\eqdef b_t-b_{P}$ associated with $A_t$ and $b_t$ satisfy the following 
\todoc{So there is conditioning for $M_t$, and no conditions for $N_t$? Interesting. Maybe comment on this
that this is not a mistake.
Also, why write $N_t^* N_t$ instead of $\norm{N_t}^2$???}
\todoc{This would be the place to state if we make assumptions above the correlations between $M_t$ and $N_t$.}
\begin{align*}
	\E\left[ \norm{M_t}^2\mid\F_{t-1}\right]\leq \sigma^2_{A_P}\,, \qquad 
\E\left[\norm{N_t}^2\mid\F_{t-1}\right]\leq\sigma^2_{b_P}\,.
\end{align*}
with some $\sigma^2_{A_P}$ and $\sigma^2_{b_P}$. Further, we assume $\EE{M_t N_t}=0$ \todoc{Could not these just be $\sigma^2_A$, $\sigma^2_b$, or with $M$ and $V$?}
\item $A_P$ is invertible and thus the vector $\ts=A^{-1}_Pb_P$ is well-defined. \todoc{Bertsekas looked at the case when $\ts$ is well-defined but $A_P$ is not invertible. Future work..?}
\end{enumerate}
\end{assumption}
\paragraph{Performance Metric:}  
We are interested in the behavior of the mean squared error (MSE) at time $t$ given by $\EE{\normsm{\thh_t-\ts}^2}$. {More generally, one can be interested in $\EEP{\normsm{\thh_t-\ts}_C^2}$, the MSE with respect to a PD Hermitian matrix $C$. Since in general it is not possible to exploit the presence of $C$ unless it is connected to $P$ in a special way,
here we restrict ourselves to $C = \I$. For more discussion, including the discussion of the case of SGD for linear least-squares when $C$ and $P$ are favourably connected see \cref{sec:related}.}
\todoc{Move up footnote. Polish..}
\todoch{Mention importance of forgetting the bias etc. Mention that we really don't have control over $\norm{U}$}

\section{Main Results and Discussion}\label{sec:mainresults}
In this section, we derive instance dependent bounds that are valid for a given problem $P$ (satisfying \Cref{assmp:lsa}) and in the \cref{sec:uniform}, we address the question of deriving uniform bounds $\forall\,P\in \P$, where $\P$ is a class of distributions (problems). Here, we only present the main results followed by a discussion. The detailed proofs can be found in \cref{sec:proofs}. 
\todoc{Usually it is expected that one gives an outline of the proof technique. Especially in a theory paper. At least verbally we could explain the tools used, how the proof is similar and/or different to previous proofs.}
In what follows, for the sake of brevity, we drop the subscript $P$ in the quantities $\EEP{\cdot}$, $\sigma^2_{A_P}$ and $\sigma^2_{b_P}$. \todoc{I dropped $P$ from $\EEP{\cdot}$ before. Either put there back, or remove $\EEP{\cdot}$ from here.}
We start with a lemma, which is needed to meaningfully state our main result:
\begin{lemma}\label{lm:hur}
Let $P$ be a distribution over $\R^d\times \R^{\dcd}$ satisfying \Cref{assmp:lsa}. \todoc{Is it important that the distribution is over real-valued stuff? Would the results work for complexed-valued data? I would think so.}
Then there exists an $\alpha_{P_U}>0$ and $U\in \gln$ such that $\rhod{P_U}>0$ and $\rhos{P_U}>0$
holds for all $\alpha \in (0,\alpha_{P_U})$. 
\end{lemma}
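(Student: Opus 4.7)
The plan is to pass to a coordinate system in which the Hermitian part of $A_{P_U}$ is positive definite, and then use a continuity/perturbation argument to handle the $\alpha A_{P_U}^* A_{P_U}$ and $\alpha C_{P_U}$ corrections for small $\alpha$.

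First I would use the Lyapunov theorem in the following form: since $A_P$ is Hurwitz, $-A_P$ is stable, so there exists a Hermitian positive definite $Q$ with $A_P^* Q + Q A_P = \I$ (in fact $\succ 0$ suffices). Factor $Q = R^* R$ with $R \in \gld$, and set $U = R^{-1}$. A direct computation shows
\begin{align*}
R^{-*}(A_P^* Q + Q A_P) R^{-1} &= R^{-*} A_P^* R^* + R A_P R^{-1} \\
&= (U^{-1} A_P U)^* + (U^{-1} A_P U) = A_{P_U} + A_{P_U}^*,
\end{align*}
so $A_{P_U} + A_{P_U}^* \succ 0$. Let $\lambda \eqdef \lambda_{\min}(A_{P_U}+A_{P_U}^*) > 0$. (An alternative route to this step, avoiding Lyapunov, is to take a Schur decomposition $A_P = V T V^*$ with $T$ upper triangular carrying the eigenvalues of $A_P$ on the diagonal, and set $U = V \,\mathrm{diag}(1,\epsilon,\epsilon^2,\dots,\epsilon^{d-1})$; choosing $\epsilon$ small enough makes the off-diagonal part of $U^{-1}A_P U$ negligible, so its Hermitian part inherits positive definiteness from $\mathrm{diag}(\mathrm{Re}(\lambda_i(A_P))) \succ 0$.)

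Next I would use the variational characterization of $\rho_d$ and $\rho_s$. For every unit vector $x\in\C^d$,
\begin{align*}
\ip{x,(A_{P_U}+A_{P_U}^* - \alpha A_{P_U}^* A_{P_U})x} &\ge \lambda - \alpha \snorm{A_{P_U}}^2,\\
\ip{x,(A_{P_U}+A_{P_U}^* - \alpha C_{P_U})x} &\ge \lambda - \alpha \snorm{C_{P_U}},
\end{align*}
so taking the infimum over $x$ yields
\begin{align*}
\rho_d(\alpha,P_U) \ge \lambda - \alpha\snorm{A_{P_U}}^2,\qquad \rho_s(\alpha,P_U) \ge \lambda - \alpha\snorm{C_{P_U}}.
\end{align*}
Thus $\alpha_{P_U} \eqdef \lambda/\max(\snorm{A_{P_U}}^2,\snorm{C_{P_U}})$ is a valid witness: for every $\alpha \in (0,\alpha_{P_U})$ both quantities are strictly positive (noting $\snorm{C_{P_U}}<\infty$ by the variance bound in \Cref{assmp:lsa}(\ref{matvar}) together with $\snorm{A_{P_U}}^2<\infty$, since $C_{P_U} - A_{P_U}^* A_{P_U}$ is the second-moment ``covariance'' of the transformed noise).

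The only substantive step is producing $U$ such that $A_{P_U}+A_{P_U}^*\succ 0$; everything afterward is a one-line perturbation bound. I would expect the Lyapunov-based construction to be the cleanest, since it directly converts the Hurwitz hypothesis on $A_P$ into the quadratic-form positivity that the definitions of $\rho_d,\rho_s$ need. The rest of the lemma is then essentially free from the smoothness of the map $\alpha \mapsto \rho_{d/s}(\alpha,P_U)$ at $\alpha=0$.
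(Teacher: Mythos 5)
Your proof is correct, but you construct the transformation $U$ by a different route than the paper. The paper's proof of this lemma is a two-step reduction: it invokes its Lemma~\ref{lm:simtran}, which builds $U$ from the \emph{Jordan decomposition} $A_P=V\tL V^{-1}$ followed by a block-diagonal rescaling $D$ (replacing the $1$'s on the superdiagonal of each Jordan block by $\re{\lambda_i}$) so that the Hermitian part of $\Lambda=U^{-1}A_PU$ becomes a positive definite tridiagonal matrix; it then applies its Lemma~\ref{lm:pd}, which is exactly your small-$\alpha$ perturbation step, except that the paper splits $C_{P_U}=A_{P_U}^*A_{P_U}+\E[M^*M]$ via the martingale property and bounds the two pieces by $\snorm{A_{P_U}}^2$ and $\sigma_A^2$, whereas you bound $\alpha\,x^*C_{P_U}x$ directly by $\alpha\snorm{C_{P_U}}$ (after noting finiteness from \Cref{assmp:lsa}); these are equivalent. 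Your Lyapunov construction ($A_P^*Q+QA_P=\I$, $Q=R^*R$, $U=R^{-1}$, giving $A_{P_U}+A_{P_U}^*=(RR^*)^{-1}\succ 0$) is arguably cleaner: it avoids the Jordan form and the explicit verification that the rescaled blocks have positive definite Hermitian part (a slightly delicate tridiagonal computation in the paper's Lemma~\ref{lm:simtran}), it automatically produces a real $U$ when $A_P$ is real, and it packages the Hurwitz hypothesis directly into the quadratic-form positivity that the definitions of $\rhod{\cdot}$ and $\rhos{\cdot}$ require; your Schur-plus-$\mathrm{diag}(1,\epsilon,\dots,\epsilon^{d-1})$ alternative is essentially the paper's idea with Schur in place of Jordan. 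What the paper's construction buys in exchange is an explicit, structured form for $\Lambda=A_{P_U}$ (diagonal $\lambda_i$, superdiagonal $\re{\lambda_i}$), which is the kind of concreteness one might want when discussing the constants $\cond{U}$, $\rhos{P_U}$, $\rhod{P_U}$ appearing in \Cref{th:rate}; since the lemma only asserts existence, either construction suffices.
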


\begin{theorem}\label{th:rate}
Let $P$ be a distribution over $\R^d\times \R^{\dcd}$ satisfying \Cref{assmp:lsa}.
Then, for  $U\in\gln$ and $\alpha_{P_U}>0$ as in \cref{lm:hur},
\todoc{Why not use $\alpha_P$? $U$ is dependent on $P$. If $\alpha_{P_U}$ does not have a specific definition, there is no point to denote this dependence on $U$. It just adds to the clutter.}
for all $\alpha\in (0,\alpha_{P_U})$ and for all $t\ge 0$,
\begin{align*}
\EE{\normsm{\thh_t-\ts}^2}
\leq
\nu\,
\left\{\frac{\norm{\theta_0-\ts}^2}{(t+1)^2}+ \frac{v^2}{t+1} \right\}\,,
\end{align*}
where $\nu = \left(1+\tfrac2{\alpha\rhod{P_U}}\right)\tfrac{\cond{U}^2}{\alpha \rhos{P_U}}$ and
$v^2 = 
\alpha^2(\sigma_A^2\norm{\ts}^2+\sigma_b^2)+\alpha (\sigma_A^2\norm{\ts})\norm{\theta_0-\ts}$.
\end{theorem}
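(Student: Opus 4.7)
The plan is two-stage: first reduce to a ``positive-definite'' instance via a change of basis supplied by \cref{lm:hur}, then carry out the MSE analysis directly in that setting.

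\textbf{Change of basis.} By \cref{lm:hur}, there exist $U\in\gld$ and $\alpha_{P_U}>0$ such that $\rhos{P_U}$ and $\rhod{P_U}$ are positive on $(0,\alpha_{P_U})$. Work with $\gamma_t = U^{-1}\theta_t$ and $\gamma^* = U^{-1}\theta_*$; the recursion in $\gamma$-variables is an LSA driven by i.i.d.\ data from $P_U$, with fixed point $\gamma^*$. Using $\norm{\hat\theta_t-\theta_*}^2 \le \norm{U}^2 \norm{\hat\gamma_t-\gamma^*}^2$ and $\norm{\gamma_0-\gamma^*}^2 \le \norm{U^{-1}}^2\norm{\theta_0-\theta_*}^2$, the $\cond{U}^2$ factor in $\nu$ arises when translating a $\gamma$-bound back to a $\theta$-bound, with transformed noise variances and $\norm{\gamma^*}$ absorbed into the same $\cond{U}^2$ factor via careful bookkeeping so that $\sigma_A, \sigma_b$ and $\norm{\theta_*}$ in $v^2$ are expressed in the original coordinates.

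\textbf{Per-iterate recursion in the PD case.} Now assume $\rhos{P},\rhod{P}>0$. With $e_t=\theta_t-\theta_*$ the update gives $e_t=(I-\alpha A_t)e_{t-1}+\alpha\zeta_t$ where $\zeta_t=N_t-M_t\theta_*$ has $\E[\zeta_t\mid\F_{t-1}]=0$. Expanding $\norm{e_t}^2$, taking conditional expectation, and using the martingale property together with $\E[M_tN_t]=0$ to eliminate cross terms gives
\begin{align*}
\E\norm{e_t}^2 \le (1-\alpha\rhos{P})\E\norm{e_{t-1}}^2 + \alpha^2(\sigma_A^2\norm{\theta_*}^2+\sigma_b^2),
\end{align*}
so by iteration $\E\norm{e_t}^2 \le (1-\alpha\rhos{P})^t\norm{e_0}^2 + \alpha(\sigma_A^2\norm{\theta_*}^2+\sigma_b^2)/\rhos{P}$.

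\textbf{PR-averaged bound --- the main obstacle.} A naive Cauchy--Schwarz bound $\norm{\hat\theta_t-\theta_*}^2 \le (t+1)^{-1}\sum_i\norm{e_i}^2$ is inadequate: the $O(\alpha\sigma^2/\rhos{P})$ noise floor of $\E\norm{e_i}^2$ would leave a constant residual in the MSE rather than the desired $1/(t+1)$ rate. The essential trick is to rearrange the recursion to
\begin{align*}
A_P\, e_{t-1} = \frac{e_{t-1}-e_t}{\alpha} + \zeta_t - M_t e_{t-1},
\end{align*}
sum so that the first-differences telescope, and solve to obtain
\begin{align*}
\sum_{i=0}^{T-1} e_i = A_P^{-1}\!\left[\frac{e_0-e_T}{\alpha} + \sum_{t=1}^T(\zeta_t - M_t e_{t-1})\right].
\end{align*}
The operator norm $\norm{A_P^{-1}}$ is controlled through $\rhod{P}$ via the contraction $\norm{(I-\alpha A_P)^k}\le (1-\alpha\rhod{P}/2)^k$ and the Neumann identity $A_P^{-1}=\alpha\sum_{k\ge 0}(I-\alpha A_P)^k$, which is the origin of the $(1+2/(\alpha\rhod{P}))$ factor in $\nu$. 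Squaring, taking expectation, and exploiting the martingale orthogonality of $(\zeta_t-M_te_{t-1})$ converts $\E\norm{\sum_t\cdot}^2$ to $\sum_t\E\norm{\cdot}^2$, each summand bounded by the per-iterate result --- this is where $1/\rhos{P}$ enters through $\E\norm{M_te_{t-1}}^2 \le \sigma_A^2\E\norm{e_{t-1}}^2$, and the cross term $\alpha\sigma_A^2\norm{\theta_*}\norm{\theta_0-\theta_*}$ in $v^2$ arises from coupling $\E\norm{e_{t-1}}$ with the $M_t\theta_*$ piece of $\zeta_t$ in $\E\langle\zeta_t,M_te_{t-1}\rangle = \theta_*^\top\E[M_t^\top M_t]e_{t-1}$. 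The truly technical step is to track every constant carefully enough so that exactly the claimed $\nu$ and $v^2$ emerge rather than a qualitatively similar but looser expression.
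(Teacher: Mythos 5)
Your two-stage plan shares the paper's skeleton---reduce to a positive-definite instance via \cref{lm:hur} and a change of basis (this is exactly \cref{lm:cb}, and your remark about keeping $\sigma_A,\sigma_b,\norm{\ts}$ in the original coordinates so that only $\cond{U}^2$ appears is the right bookkeeping)---but your core argument in the PD case is genuinely different from the paper's. The paper never forms $A_P^{-1}$: it writes $(t+1)^2\,\EE{\norm{\eh_t}^2}=\sum_{i,j}\EE{\ip{e_i,e_j}}$, kills the off-diagonal terms by conditional unrolling through the random products $F_{j,i+1}$ (\cref{lem:genunroll,noisecancel,lem:unroll}), so that $\EE{\ip{e_i,e_j}}=\EE{\ip{e_i,(I-\alpha A_P)^{j-i}e_i}}$ and the geometric series over $j>i$ produces the factor $2/(\alpha\rhod{P})$; the diagonal terms are then bounded through the one-step contraction of \cref{innerproduct}, which is where $1/(\alpha\rhos{P})$ and both pieces of $v^2$ arise (\cref{th:pdrate}). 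Your route is instead the classical Polyak--Juditsky-style summation: telescope $A_P e_{t-1}=(e_{t-1}-e_t)/\alpha+\zeta_t-M_te_{t-1}$, use martingale orthogonality of the summands, and control $\norm{A_P^{-1}}\le 2/\rhod{P}$ by the Neumann series. That is a viable alternative and in some ways attractive (it exposes the $A_P^{-1}$-shaped variance), but as presented it has two concrete gaps.

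First, your per-iterate recursion is not justified as stated: expanding $\norm{e_t}^2$ leaves the conditional cross term $-2\alpha^2\,\mathrm{Re}\,\E[\ip{M_te_{t-1},\zeta_t}\mid\F_{t-1}]$, which contains $e_{t-1}^*\,\EE{M_t^*M_t}\,\ts$; this is \emph{not} annihilated by $\EE{M_tN_t}=0$, so the displayed inequality needs an additional term of order $\alpha^2\sigma_A^2\norm{\ts}\,\EE{\norm{e_{t-1}}}$ (you acknowledge this coupling later in the averaged analysis, but it already enters the one-step bound and then feeds back into your noise floor through $\EE{\norm{e_{t-1}}}\le(\EE{\norm{e_{t-1}}^2})^{1/2}$). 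Second, the stated constants will not emerge from your decomposition: after squaring your identity you must also handle the non-orthogonal cross term between $(e_0-e_T)/\alpha$ and the martingale sum (e.g.\ by Cauchy--Schwarz or Young, costing extra factors), and $\norm{A_P^{-1}}^2\le 4/\rhod{P}^2$ replaces the paper's $\bigl(1+\tfrac{2}{\alpha\rhod{P}}\bigr)\tfrac{1}{\alpha\rhos{P}}$. The result is a bound of the same $\tfrac{1}{(t+1)^2}+\tfrac{1}{t+1}$ shape but with a quantitatively different $\nu$ and $v^2$; since the theorem specifies $\nu$ and $v^2$ exactly, ``careful tracking'' along your route proves a variant of the statement, not the statement itself. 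To obtain the claimed constants you need the paper's double-sum decomposition (or you should restate the theorem with the constants your argument actually delivers).
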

Note that $\nu$ depends on $P_U$ and $\alpha$, while $v^2$ in addition also depends on $\theta_0$. The dependence,  when it is essential, will be shown as a subscript.
\begin{theorem}[Lower Bound]\label{th:lb}
There exists a distribution $P$ over $\R^d\times \R^{\dcd}$ satisfying \Cref{assmp:lsa}, such that
there exists $\alpha_P>0$ so that $\rhos{P}>0$ and $\rhod{P}>0$ hold for all $\alpha\in (0,\alpha_P)$ and
for any $t\ge 1$, \todoc{$t\ge 1$?}
\begin{align*}
\EE{\normsm{\thh_t-\ts}^2} 
&\geq \frac{1}{\alpha^2 \, \rhod{P}\rhos{P}} \,\left\{ \frac{\beta_t \norm{\theta_0-\ts}^2}{(t+1)^2} 
+ \frac{v^2\sum_{s=1}^t \beta_{t-s}  }{(t+1)^2} \right\}\,,
\end{align*}
where $\beta_{t} =  \big(1-(1-\alpha \rhos{P})^t\big)$ and $v^2$ is as in \cref{th:rate}.
\end{theorem}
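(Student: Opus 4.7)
The plan is to exhibit an explicit, low-dimensional distribution $P$ for which $\mathbf{E}\|\hat\theta_t-\theta_*\|^2$ admits a closed-form expression, and then verify the claimed inequality by elementary algebra. The natural candidate is the scalar case $d=1$ with $A_t\equiv a>0$ a.s.\ and $b_t=a\theta_*+\xi_t$ where $\xi_t$ are i.i.d.\ mean-zero with variance $\sigma_b^2$. This trivially satisfies \Cref{assmp:lsa}, and direct calculation gives $A_P=a$, $C_P=a^2$, so $\rhos{P}=\rhod{P}=2a-\alpha a^2>0$ for all $\alpha\in(0,2/a)$; hence $\alpha_P=2/a$ is a valid witness. (If the scalar construction turns out to be too weak to saturate the form of $\beta_t$, one can promote it to the 1-D complex / 2-D rotation matrix from the paper's example of a Hurwitz matrix, where the same closed-form approach goes through.)

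Next I would unroll the error recursion. Writing $e_t=\theta_t-\ts$ and $r=1-\alpha a$, the recursion $e_t = r\, e_{t-1} + \alpha \xi_t$ telescopes to
\begin{equation*}
e_t \;=\; r^t e_0 \;+\; \alpha \sum_{s=1}^t r^{\,t-s}\xi_s,
\end{equation*}
and, after summing geometric series,
\begin{equation*}
(t+1)\,\hat e_t \;=\; e_0\,\frac{1-r^{t+1}}{1-r} \;+\; \frac{\alpha}{1-r}\sum_{s=1}^t \xi_s\,(1-r^{\,t-s+1}).
\end{equation*}
Since the two summands are uncorrelated and the $\xi_s$ are i.i.d., Pythagoras yields the exact identity
\begin{equation*}
(t+1)^2\,\EE{\hat e_t^{\,2}} \;=\; \frac{e_0^2\,(1-r^{t+1})^2}{(1-r)^2} \;+\; \frac{\alpha^2\sigma_b^2}{(1-r)^2}\sum_{s=1}^t (1-r^{\,t-s+1})^2.
\end{equation*}

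The final step is to compare this with the claimed bound. Using $\alpha\rhos{P}=\alpha\rhod{P}=1-r^2$, the target RHS becomes $(t+1)^{-2}(1-r^2)^{-2}\bigl[(1-r^{2t})\,e_0^2 + \alpha^2\sigma_b^2 \sum_{s=1}^t (1-r^{2(t-s)})\bigr]$, since in this instance $v^2=\alpha^2\sigma_b^2$ and $\beta_k = 1-r^{2k}$. Term-by-term, the needed inequalities reduce to $(1+r)^2(1-r^{k})^2 \;\ge\; 1-r^{2(k-1)}$ for the relevant indices $k$, which follows by factoring $1-r^{2(k-1)} = (1-r^{k-1})(1+r^{k-1})$ and using $1+r^{k-1}\le 2\le (1+r)^2\cdot\bigl((1-r^k)/(1-r^{k-1})\bigr)$ for $r$ bounded away from $0$. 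The bias term is handled analogously. Lifting to $\theta_*\neq0$ (or to the multi-dimensional version with random $A_t$) only adds the $\sigma_A^2\|\ts\|$ contributions to $v^2$, which appear naturally from the cross-terms $M_s\theta_*$ in $\xi_s$ and cause no new difficulty.

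The main obstacle, as hinted above, is the algebraic matching between the naturally appearing geometric factor $(1-r)=\alpha a$ in the closed-form MSE and the form $\alpha\rhos{P}=1-r^2$ required by $\beta_t$: essentially, one must trade a factor of $(1+r)^2\in[1,4]$ between numerator and denominator uniformly in $t$. For $t$ large enough that $r^t$ is small this is routine (both sides behave like $v^2/t$), while for small $t$ the inequality has to be checked by a direct comparison that exploits $r\in(0,1)$ and the two extra factors of $(1+r)$ coming from $(1-r^2)^2=(1-r)^2(1+r)^2$. If this elementary matching turns out to be tight only asymptotically, the construction can be refined by choosing $P$ so that the data distribution has a bad eigendirection aligned with $e_0$, which is precisely what selecting a Hurwitz-but-not-PD $A_P$ allows one to do.
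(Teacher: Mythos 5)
Your construction and route are essentially the paper's own proof: the paper takes $b_t=(N_t,0)^\top$, $A_t\equiv A_P=\mathrm{diag}(\lambda_{\min},\lambda_{\max})$ (so $\ts=0$, $\sigma_A^2=0$, $v^2=\alpha^2\sigma_b^2$), unrolls the recursion into the same geometric-series closed form for $(t+1)\eh_t$, and lower bounds $\EE{\norm{\eh_t}^2}$ by the first coordinate, where $\rhos{P}=\rhod{P}=\lambda_{\min}(2-\alpha\lambda_{\min})$; your scalar computation with $r=1-\alpha a$ and $\alpha\rhos{P}=1-r^2$ is the one-dimensional version of exactly this.

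The genuine gap is in your last step, the term-by-term matching, and it cannot be repaired as you propose. The inequality you need, $(1+r)^2(1-r^{k+1})^2\ge 1-r^{2k}$ (your $(1+r)^2(1-r^k)^2\ge 1-r^{2(k-1)}$), is false when $r$ is close to $1$: for $k=1$, $r=0.9$ the left side is $3.61\cdot 0.0361\approx 0.13$ while the right side is $0.19$. Since $r\to1$ means $\alpha\to0$, and the theorem quantifies over all $\alpha\in(0,\alpha_P)$ and all $t\ge1$, this regime cannot be excluded, so your caveat ``for $r$ bounded away from $0$'' guards the wrong end of the interval; moreover your chain $1+r^{k-1}\le 2\le(1+r)^2(1-r^k)/(1-r^{k-1})$ still leaves you needing $(1-r^k)^2\ge 1-r^k$, which fails for every $r\in(0,1)$. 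The structural reason is that the exact formula yields the \emph{squared} factor $(1-r^{t+1})^2/(1-r)^2$, whereas the stated bound has $\beta_t=1-(1-\alpha\rhos{P})^t$ to the \emph{first} power with prefactor $1/(\alpha^2\rhod{P}\rhos{P})$; for fixed $t$ and $\alpha\to0$ the stated bias term grows like $t\norm{\theta_0-\ts}^2/(\alpha\rhos{P}(t+1)^2)$ while the true MSE stays of order $\norm{\theta_0-\ts}^2$, so no term-by-term domination is possible in your example (nor in the paper's two-dimensional one). To be fair, this is a shared soft spot: the paper's own proof stops at the exact expression and dismisses the comparison with a one-line ``the result follows by noting\dots''; the matching is really only asymptotic (as $\beta_t\to1$). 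A finite-$t$ statement your computation does prove replaces $\beta_t$ by $\bigl(1-(1-\alpha\rhos{P})^{t+1}\bigr)^2$ (and similarly inside the sum), using that $x\mapsto\bigl(1-(1-x)^{t+1}\bigr)/x$ is decreasing and $\alpha\lambda_{\min}\le\alpha\rhos{P}$; against the theorem as literally stated, however, your final step fails, and switching to a Hurwitz-but-not-PD $A_P$ does not change this mismatch.
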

Note that $\beta_t \to 1$ as $t\to\infty$. Hence, the lower bound essentially matches the upper bound.
In what follows, we discuss the specific details of these results. 

\textbf{Role of $U$}: $U$ is helpful in transforming the recursion in $\theta_t$ to $\gamma_t=U^{-1}\theta_t$, which helps in ensuring $\rhos{P_U}>0$. Such similarity transformation have also been considered in analysis of RL algorithms \cite{lihong}.
More generally, one can always take $U$ in the result that leads to the smallest bound.\par
\textbf{Role of $\rhos{P}$ and $\rhod{P}$}: 
When $P$ is positive definite, we can expand the MSE as 
\begin{align}
\label{eq:exp}
	\EE{\norm{\eh_t}^2}=\tfrac{1}{(t+1)^2}\, \ip{ \textstyle\sum_{s=0}^t e_s,\textstyle\sum_{s=0}^t e_s}\,,
\end{align} 
where $\eh_t = \thh_t-\ts$ and $e_t = \theta_t-\ts$.
The inner product in \eqref{eq:exp} is a summation of \emph{diagonal} terms $\EE{\ip{e_s,e_s}}$ and \emph{cross} terms of $\EE{\ip{e_s,e_q}}$, $s\neq q$. The growth of the diagonal terms and the cross terms depends on the spectral norm of the random matrices $H_t=I-\alpha A_t$ and that of the deterministic matrix $H_P=I-\alpha A_P$, respectively. These are given by
justifying the appearance of $\rhos{P}$ and $\rhod{P}$.
For the MSE to be bounded, we need the spectral norms to be less than unity, implying the conditions $\rhos{P}>0$ and $\rhod{P}>0$. If $P$ is Hurwitz, we can argue on similar lines by first transforming $P$ into a positive definite problem $P_U$ and replacing $\rhos{P}$ and $\rhod{P}$ by $\rhos{P_U}$ and $\rhod{P_U}$, and introducing $\cond{U}$ to account for the forward ($\gamma=U^{-1}\theta$) and reverse ($\theta=U\gamma$) transformations using $U^{-1}$ and $U$ respectively.

\textbf{Constants} {$\alpha$, $\rhos{P}$ and $\rhod{P}$} do not affect the exponents $\frac{1}{t}$ for variance and $\frac{1}{t^2}$ for bias terms. This property is not enjoyed by all step-size schemes, for instance, step-sizes that diminish at $O(\frac{c}{t})$ are known to exhibit $O(\frac{1}{t^{\mu c/2}})$ ($\mu$ is the smallest real part of eigenvalue of $A_P$), and hence the exponent of the rates are not robust to the choice of $c>0$ \cite{bach-moulines,korda-prashanth}.
\todoc{Why is this a great thing? Refer to other works etc.}

\textbf{Bias and Variance}: The MSE at time $t$ is bounded by a sum of two terms. The first \emph{bias} term is given by $\B=\nu \,\frac{\norm{\theta_0-\ts}^2}{(t+1)^2}$, bounding how fast the initial error $\norm{\theta_0-\ts}^2$ is forgotten. 
The second \emph{variance} term is  given by $\V=\nu\, \frac{v^2}{t+1} $ and captures the rate at which noise is rejected.

\textbf{Behaviour for extreme values of $\alpha$}: 
As $\alpha\to 0$, the bias term blows up, due to the presence of $\alpha^{-1}$ there. This is unavoidable (see also \cref{th:lb}) and is due to the slow forgetting of initial conditions for small $\alpha$. Small step-sizes are however useful to suppress noise, as seen from that in our bound $\alpha$ is seen to multiply the variances $\sigma^2_A$ and $\sigma^2_b$. In quantitative terms, we can see that the $\alpha^{-2}$ and $\alpha^2$ terms are trading off the two types of errors. 
For larger values of $\alpha$ with $\alpha_P$ chosen so that $\rhos{P}\ra 0$ as $\alpha\ra \alpha_{P}$  (or $\alpha_{P_U}$ as the case may be), the bounds blow up again.

\textbf{The lower bound} of \Cref{th:lb} shows that the upper bound of \cref{th:rate} is tight in a number of ways.
In particular, the coefficients of both the $1/t$ and $1/t^2$ terms inside $\{ \cdot \}$ are essentially matched.
Further, we also see that
the $(\rhos{P}\rhod{P})^{-1}$ appearing in $\nu = \nu_{P_u,\alpha}$ cannot be removed from the upper bound. 
Note however that there are specific examples, such as SGD for linear least-squares,
where this latter factor can in fact be avoided (for further remarks see \Cref{sec:related}).


\section{Uniform bounds}\label{sec:uniform}
If $\P$ is weakly admissible, \todoc{Maybe moving the definitions concerning admissibility here would be better. The text could be shortened, etc. Ideally, everything is defined just before its used for the first time, except some things which are needed everywhere.}
then one can choose some step-size $\alpha_{\P}>0$ solely based on the knowledge of $\P$ and
conclude that for any $P\in \P$, the MSE will be bounded as shown in \Cref{th:rate}. 
When $\P$ is not weakly admissible but rich enough to include the examples showing \cref{th:lb}, 
no fixed step-size can guarantee bounded MSE for all $P\in \P$.
On the other hand, if $\P$ is admissible then the error bound stated in  \cref{th:rate} becomes independent of the instance,
while when $\P$ is not admissible, but ``sufficiently rich'', this does not hold.
Hence, an interesting question to investigate is whether a given set $\P$ is (weakly) admissible. 

A reasonable assumption is that $(b_t,A_t)\in \R_B^{d}\times \R_B^{\dcd}$ with some $B>0$ (i.e., the data  is bounded with bound $B$) and that $A_P$ is positive definite for $P\in \P$. Call the set of such distributions $\P_B$.
\todoc{By allowing complex valued data, \cref{lm:notwad}, a negative result, is weakened. In fact, for precision, we probably should make it clear what distributions need to be in $\P_B$ for
\cref{lm:notwad} to hold.}
Is positive definiteness and boundedness sufficient for weak admissibility? The answer is no:
\begin{proposition}\label{lm:notwad}
For any fixed $B>0$,
the set $\P_B$ is not weakly admissible.
\end{proposition}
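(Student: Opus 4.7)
The plan is to falsify weak admissibility directly: given any candidate witness $\alpha_\P > 0$, I would exhibit a single $P \in \P_B$ together with some $\alpha \in (0,\alpha_\P)$ at which $\rhos{P} \le 0$. Since weak admissibility demands $\rhos{P} > 0$ uniformly over $P \in \P$ and $\alpha \in (0,\alpha_\P)$, producing such an instance $P$ defeats the candidate, and since $\alpha_\P$ is arbitrary, no witness can exist.

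The construction is a scalar-in-disguise example. For a parameter $\epsilon \in (0, 1/2)$ to be chosen later, let $\sigma$ be a random sign with $\Pr(\sigma = +1) = \tfrac12+\epsilon$ and $\Pr(\sigma = -1) = \tfrac12-\epsilon$, and let $P$ be the law of $(b, A)$ with $b = 0$ and $A = \mathrm{diag}(B\sigma, B, B, \ldots, B) \in \R^{\dcd}$. Then $\|A\| = B$ and $\|b\| = 0$, so $P$ is supported on $\R_B^d \times \R_B^{\dcd}$. Moreover $A_P = \mathrm{diag}(2\epsilon B, B, \ldots, B)$ is SPD (hence PD), confirming $P \in \P_B$. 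A direct calculation yields $C_P = \E[A^\top A] = B^2 \I$, so
\begin{align*}
\rhos{P} = \lambda_{\min}\bigl(2A_P - \alpha C_P\bigr) = \min\bigl\{ 4\epsilon B - \alpha B^2,\; 2B - \alpha B^2 \bigr\}\,.
\end{align*}

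To finish, I would tune $\epsilon$ to the candidate. Fix any $\alpha \in \bigl(0, \min\{\alpha_\P, 1/B\}\bigr)$, and set $\epsilon := \alpha B/8 \in (0, 1/8)$. Then $2B - \alpha B^2 \ge B > 0$, while $4\epsilon B - \alpha B^2 = \alpha B^2/2 - \alpha B^2 = -\alpha B^2/2 < 0$, so $\rhos{P} < 0$, contradicting that $\alpha_\P$ witnesses weak admissibility.

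The obstacle is conceptual rather than technical: positive definiteness of $A_P$ places no uniform lower bound on its smallest eigenvalue relative to the second-moment matrix $C_P$. The scalar part of the construction exploits precisely this, making $\lambda_{\min}(A_P) = 2\epsilon B$ arbitrarily small while $C_P = B^2 \I$ remains of order $B^2$, so the gap $2 A_P - \alpha C_P$ becomes indefinite for any fixed $\alpha > 0$ once $\epsilon$ is small enough. The padding with $B$ on the remaining diagonal entries is only there to meet the dimension requirement without perturbing either PD-ness of $A_P$ or the bound $\|A\| \le B$.
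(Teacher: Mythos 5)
Your proof is correct and follows essentially the same route as the paper's: a biased random-sign construction where $A_P$ stays PD but $\lambda_{\min}(A_P)$ shrinks like $\epsilon$ while $C_P$ stays of order $B^2$, so $\rhos{P}<0$ once $\epsilon$ is small relative to any candidate $\alpha$. The only differences are cosmetic — the paper flips the sign of the whole matrix ($P^M$ supported on $\{\pm I\}$, giving $\rhos{P}=4\epsilon-\alpha$) and works with $B=1$, whereas you flip a single diagonal entry and carry the $B$-scaling explicitly.
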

Consider now the strict subset of $\P_B$ that contains distributions $P$ such that for any $A$ in the support of $P$, $A$ is PSD.
Call the resulting set of distributions $\P_{\text{PSD},B}$.
Note that the distribution of data originating from linear least-squares estimation with SGD is of this type.
Is $\P_{\text{PSD},B}$ weakly admissible? The answer is yes in this case:
\begin{proposition}\label{lm:ppsdbwd}
For any $B>0$, the set $\P_{\text{PSD},B}$ is weakly admissible
and in particular any $0<\alpha < 2/B$ witnesses this.%
\todoc{So this altogether is weaker than what Bach founds. They prove strong admissibility.
I am guessing that for this they use that the loss is special and also that the noise is special.
This would be a good place to somehow incorporate this with a (short?) proof.
}
\end{proposition}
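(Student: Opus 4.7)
\textbf{Proof proposal for \cref{lm:ppsdbwd}.}

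The plan is to produce, for any $P\in\P_{\text{PSD},B}$ and any $0<\alpha<2/B$, an operator-inequality lower bound on the Hermitian form $(A_P+A_P^*)-\alpha C_P$ that is strictly positive, which immediately yields $\rhos{P}>0$. Since the resulting $\alpha$-range is the same for every $P\in\P_{\text{PSD},B}$, weak admissibility with witness $2/B$ will follow at once.

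First I would observe that the paper's definition of PSD ($\inf_x \langle x,Ax\rangle \ge 0$ on $\C^d$) forces $A$ to be Hermitian by polarization, since $\langle x,Ax\rangle$ is real for all $x\in\C^d$. Thus every $A$ in the support of $P^M$ is Hermitian PSD with $\|A\|\le B$, and its eigenvalues lie in $[0,B]$. Averaging preserves Hermiticity and PSD-ness, so $A_P$ is Hermitian PSD, and because $\P_{\text{PSD},B}\subseteq \P_B$ the matrix $A_P$ is in fact positive definite; in particular $A_P+A_P^*=2A_P\succ 0$ and $\lambda_{\min}(A_P)>0$.

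Next I would control $C_P$ in terms of $A_P$. For a Hermitian PSD $A$ with eigenvalues in $[0,B]$, a spectral argument gives $A^*A=A^2\preceq B\,A$ in the L\"owner order. Since integration against $P^M$ is monotone on the positive cone, this yields
\[
C_P=\int M^*M\,dP^M(M)\preceq B\,A_P.
\]
Combining, for $\alpha<2/B$,
\[
(A_P+A_P^*)-\alpha C_P \;=\; 2A_P-\alpha C_P \;\succeq\; (2-\alpha B)\,A_P \;\succ\; 0,
\]
so that $\rhos{P}\ge (2-\alpha B)\,\lambda_{\min}(A_P)>0$. Since the threshold $2/B$ depends only on $B$, it witnesses weak admissibility of $\P_{\text{PSD},B}$.

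The only real subtlety — and the step I would scrutinize most carefully — is the operator inequality $A^2\preceq B\,A$, which depends crucially on $A$ being Hermitian so that a joint spectral decomposition is available. Without the Hermitian property (e.g., on the larger class $\P_B$ where only $A_P$, not each $A$ in support, is constrained) this bound is false, and indeed \cref{lm:notwad} tells us weak admissibility then fails; so the Hermiticity extracted from the PSD hypothesis is doing the essential work here.
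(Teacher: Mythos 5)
Your proof is correct and follows essentially the same route as the paper's: the key step in both is the operator bound $A^*A\preceq B\,A$ for each PSD matrix in the support (via its spectral decomposition), which after taking expectations gives $C_P\preceq B\,A_P$ and hence $(A_P+A_P^*)-\alpha C_P\succeq(2-\alpha B)A_P\succ 0$ for $\alpha<2/B$. Your added remarks — that the complex quadratic-form definition of PSD forces Hermiticity, and that strict positivity of $\rhos{P}$ uses $A_P\succ 0$ inherited from $\P_B$ — make explicit two points the paper's proof leaves implicit, but do not change the argument.
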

However, admissibility does not hold for the same set:
\begin{proposition}\label{lm:ppsdbna}
For any $B>0$, the set $\P_{\text{PSD},B}$ is not admissible.
\end{proposition}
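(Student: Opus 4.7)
The plan is to contradict admissibility by showing that $\inf_{P\in\P_{\text{PSD},B}}\rho_s(\alpha,P)\le 0$ for every $\alpha>0$. This is enough because if some $\alpha_\P>0$ were to witness admissibility, every $\alpha\in(0,\alpha_\P)$ would be required to yield a \emph{strictly} positive infimum, which the previous inequality rules out.

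The key preliminary I would establish is that within $\P_{\text{PSD},B}$ one has the pointwise bound $\rho_s(\alpha,P)\le 2\lambda_{\min}(A_P)$ for every $\alpha\ge 0$. Since every $A$ in the support of $P^M$ is PSD (hence real symmetric), the mean $A_P$ is itself real symmetric PSD, so $A_P+A_P^*=2A_P$. Taking $x$ to be a unit eigenvector of $A_P$ associated with $\lambda_{\min}(A_P)$ and using that $C_P=\E[A^\top A]$ is PSD (so $-\alpha\langle x,C_P x\rangle\le 0$) gives $\langle x,(2A_P-\alpha C_P)x\rangle\le 2\lambda_{\min}(A_P)$, which upper bounds the infimum $\rho_s(\alpha,P)$.

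It then suffices to exhibit a family $\{P_\epsilon\}_{\epsilon\in(0,B]}\subset\P_{\text{PSD},B}$ with $\lambda_{\min}(A_{P_\epsilon})\to 0$. The simplest choice is the Dirac mass at $(0,\epsilon I)$: the matrix $\epsilon I$ is PSD with norm $\epsilon\le B$, and $A_{P_\epsilon}=\epsilon I$ is PD and hence invertible, so $P_\epsilon\in\P_{\text{PSD},B}$. A one-line computation gives $\rho_s(\alpha,P_\epsilon)=\epsilon(2-\alpha\epsilon)$, which tends to $0$ as $\epsilon\to 0^+$ for any fixed $\alpha>0$, completing the argument.

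I do not anticipate any real obstacle: the proposition is essentially the observation that $\P_{\text{PSD},B}$ places no quantitative floor on $\lambda_{\min}(A_P)$, and this single eigenvalue already controls $\rho_s$ from above, so uniformity over the class cannot hold. The only mild care is to keep $\epsilon$ strictly positive so that $A_{P_\epsilon}$ remains PD (invertible) as required by the enveloping class $\P_B$; in particular this is what separates this negative result from the positive weak-admissibility result of \Cref{lm:ppsdbwd}, where each fixed $P$ already has $A_P$ PD and therefore $\rho_s(\alpha,P)>0$ for small $\alpha$, without any uniformity.
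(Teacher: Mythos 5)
Your proof is correct and takes essentially the same route as the paper, whose entire argument is the one-line hint ``consider the case when the smallest eigenvalue of $A_P$ is $0$.'' Your explicit family $P_\epsilon$ given by the Dirac mass at $(0,\epsilon I)$, with $\rho_s(\alpha,P_\epsilon)=\epsilon(2-\alpha\epsilon)\to 0$ as $\epsilon\to 0^+$, simply makes that hint rigorous, and you are right to keep $\epsilon>0$ so that $A_{P_\epsilon}$ stays PD and each $P_\epsilon$ genuinely belongs to $\P_{\text{PSD},B}$ (the paper's literal limiting case would fall outside the class).
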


\section{Related Work}\label{sec:related}
We first discuss the related work outside of RL setting, followed by related work in the RL setting. In both cases, we highlight the  insights that follows from the results in this paper.

\textbf{SGD for LSE}: As mentioned in the previous section, distributions underlying
SGD for LSE with bounded data 
is a subset of $\P_{\text{PSD},B}$ and hence is weakly admissible under a fixed constant step-size choice. 
However, we also noted that $\P_{\text{PSD},B}$ is not admissible. This seems to be at odds with the result of \citet{bach}
who prove that the MSE of SGD with CS-PR with an appropriate constant is bounded by
$\frac{C}{t}$ where $C>0$ only depends on $B$. The apparent contradiction is resolved by noting that 
{\em (i)} in SGD the natural loss is $\EE{\normsm{\thh_t-\ts}^2_{A_P}}$ with $A_P$ SPD, and 
{\em (ii)} the noise (arising due to the residual error) is ``structured'', i.e., its variance \todoc{What exactly is this?}
is bounded by $R\,A_P$ for some constant $R>0$ (see $\mathcal{A}3$, \cite{bach}).
\todoc{There is a subtlety here, because, if I believe you, 
\citet{bach} sells their result as a non-asymptotic result. How can they do this?
Can we be more explicit where in the paper $C_{P',\alpha}/t^2$ shows up -- see the notation below? We must be very explicit here.
The reviewers may not like this.
}

\textbf{Additive vs. multiplicative noise}: Analysis of LSA with CS-PR goes back to the work by \citet{polyak-judisky}, wherein they considered the additive noise setting i.e., $A_t=A$ for some deterministic Hurwitz matrix $A\in \R^{\dcd}$.
A key improvement in our paper is that we consider the `multiplicative' noise case, i.e., $A_t$ is non-constant random matrix. To tackle the multiplicative noise we use newer analysis introduced by \citet{bach}. However, since the general LSA setting (with Hurwitz assumption) does not enjoy special structures of the SGD setting of \citet{bach}, we make use of Jordan decomposition and similarity transformations in a critical way to prove our results, thus diverging from the line of analysis of 
\citet{bach}.

\textbf{Results for RL}: We are presented with data in the form of an \iid sequence $(\phi_t,\phi'_t,r_t)\in \R^d\times\R^d\times \R$. For a fixed constant $\gamma \in (0,1)$ define  $\Delta_t\eqdef \phi_t\phi_t^\top-\gamma \phi_t\phi_t^{'\top}$, $C_t=\phi_t\phi_t^\top$ and $b_t=\phi_r r_t$. In what follows, $\mu_t>0$ is an \emph{importance} sampling factor whose aim is to correct for mismatch in the (behavior) distribution with which the data was collected and the (target) distribution with respect to which one wants to learn. A factor $\mu_t=1,\,\forall t\geq 0$ will mean that no correction is required\footnote{This is known as the \emph{on-policy} case where the behavior is identical to the target. The general setting where $\mu_t>0$ is known as \emph{off-policy}.}. The various TD class of algorithms that can be cast as LSAs are given in \cref{tb:tdalgo}.
\begin{table}
\resizebox{\columnwidth}{!}{
\begin{tabular}{|c|l|p{60mm}|}\hline
Algorithm& Update & Remark\\ \hline
TD(0) 
	& $\begin{aligned}\theta_t&=\theta_{t-1}+\alpha_t (b_t -\Delta_t\theta_{t-1})\end{aligned}$ 
	& \cite{korda-prashanth}: $\alpha_t=O(\frac{1}{t})^\beta$, $\beta\in(0,1)$; PR-avg, ``on-policy''; $\EE{\norm{\eh_t}}= O(\frac{1}{\sqrt{t}}).$ \\ \hline
GTD/GTD2
	& $\begin{aligned} y_{t}&=y_{t-1}+\beta_t(\mu_t b_t -\mu_t \Delta_t \theta_{t-1}- Q_t y_{t-1})\\ \theta_t &=\theta_{t-1}+\alpha_t (\mu_t A^\top_t y_{t-1})\end{aligned}$
	& \cite{gtd2}: $\beta_t=\eta \alpha_t$, $\sum_{t\geq 0}\alpha_t=\infty$, $\sum_{t\geq 0}\alpha^2_t<\infty$; $e_t\ra 0$ as $t\ra\infty$ w.p.1. 
	\cite{gtdmp}: $\alpha_t=\beta_t=O(\frac{1}{\sqrt{t}})$; Projection+PR; $\,\norm{e_t} =O(t^{-\frac{1}{4}})$ w.h.p.\\\hline
\end{tabular}
}
\caption{Rates for TD algorithms available in the literature \cite{korda-prashanth,gtdmp,gtd2}. }
\label{tb:tdalgo}
\end{table}
The TD(0) algorithm is the most basic of the class of TD algorithms. 
An important shortcoming of TD(0) was its instability in the \emph{off-policy} case, which 
was successfully mitigated by the \emph{gradient temporal difference} learning GTD algorithm \cite{gtd2}. 
GTD was proposed by \citet{gtd}; its variants, namely GTD2 and TDC, were proposed later by \citet{gtd2}. 
The initial convergence analysis for GTD/GTD2/TDC was only asymptotic in nature \cite{gtd,gtd2} with diminishing step-sizes.

The most relevant to our results are those by \citet{korda-prashanth} in TD(0) and by \citet{gtdmp} in GTD. For the TD(0) case,  diminishing step-sizes $\alpha_t=O(\frac{1}{t})^\beta,\beta \in(0,1)$ with PR averaging is showed to exhibit a rate of $O(\frac{1}{t})$ decay for the MSE when $\beta\ra 1$ \cite{korda-prashanth}. 
In the case of GTD/GTD2 diminishing step-sizes $\alpha_t=O(\frac{1}{\sqrt{t}})$, projection of iterates and PR-averaging leads to a rate of $O(\frac{1}{\sqrt{t}})$ 
for the prediction error $\normsm{A_P\thh_t-b_P}^2$ with high probability \cite{gtdmp}. 
\citet{gtdmp} also suggest a new version of GTD based on stochastic mirror prox ideas, called the GTD-Mirror-Prox, 
which also shown to achieve an $O(\frac{1}{\sqrt{t}})$ rate for $\normsm{A_P\thh_t-b_P}^2$ with high probability under similar step-size choice that was used by them for the GTD.

All previous results on these RL algorithms assume that \Cref{assmp:lsa} holds (the Hurwitz assumption is satisfied by definition for on-policy TD(0), while it holds by design for the others). \todoc{Are we sure?
How about TDC? How about GTD-Mirror-Prox?}
Thus, \Cref{th:rate} applies to all of TD(0)/GTD/GTD2 with CS-PR in all cases considered in the literature.
In particular, our results show that the error in the GTD class of algorithms decay at the $O(\frac{1}{t})$ rate (even without use of projection or mirror maps) instead of $O(\frac{1}{\sqrt{t}})$, a major improvement on previously published results. In comparison to the TD(0) results by \citet{korda-prashanth}, \Cref{th:rate} is better in that it provides the bias/variance decomposition. 
While the $i.i.d$ assumption is made in much of prior work \cite{gtd2,gtdmp}, however, it is important to note that \citet{korda-prashanth} handle the Markov noise case which is not dealt with in this paper. \todoc{How about others? We should mention if others also assume iid.}
\section{Automatic Tuning of Step-Sizes}\label{sec:stepsizes}
It is straightforward to see from \eqref{eq:lsaintro} that $\alpha_t$ cannot be asymptotically increasing. We now present some heuristic arguments in favour of a constant step-size over asymptotically diminishing step-sizes in \eqref{eq:lsaintro}.
It has been observed that when the step-sizes of form $\alpha_t=\frac{c}{t}$ or $\alpha_t=\frac{c}{c+t}$ (for some $c>0$) are used, the MSE, $\EE{\normsm{\theta_t-\ts}^2}$, is not robust to the choice of $c>0$ \cite{korda-prashanth,bach-moulines}. In particular only a $O(\frac{1}{t^{{\mu c}/2}})$ decay can be achieved for the MSE, where $\mu$ is the smallest positive part of the eigenvalues of \todoc{I added: ``eigenvalues of''} $A_P$ \cite{bach-moulines}. Note that, in the case of LSA with CS-PR, \Cref{th:rate} guarantees a $O(\frac{1}{t})$ rate of decay for the MSE and the problem dependent quantities affect only the constants and not the exponent. Also, in the case of important TD algorithms such as GTD/GTD2/TDC, while the theoretical analysis uses diminishing step-sizes, the experimental results are with a constant step-size or with CS and PR averaging \cite{gtd2,gtdmp}. Independently, \citet{dann} also observe in their experiments that a constant step-size is better than diminishing step-sizes.\par
We would like to remind that in \Cref{sec:uniform} we showed that weak admissibility might not hold for all problem classes, and hence a uniform choice for the constant step-size might not be possible, However, motivated by \Cref{th:rate} and also by the usage of constant step-size in practice \cite{dann,gtd2,gtdmp}, we suggest a natural algorithm to tune the constant step-size, shown as \Cref{alg:tuning}.\par
\begin{algorithm}
\caption{Automatic Tuning of Constant Step-Size}
\begin{algorithmic}[1]
\STATE{Initialize: $\theta_0$, $\alpha=\alpha_{\max}$, $k$, $T$ }
\FOR{$t=1,2,\ldots, $}
\STATE{$\theta_t=\theta_{t-1}+\alpha (b_t-A_t\theta_{t-1}), \thh_t=\thh_{t-1}+\frac{1}{t+1}(\theta_t-\thh_{t-1})$}
\IF{$IsUnstable(\normsm{\thh_t},\ldots,\normsm{\thh_{(t-kT)\wedge 0}})=\mathrm{True}$}
\STATE{$\alpha=\alpha/2$}
\ENDIF
\ENDFOR
\end{algorithmic}
\label{alg:tuning}
\end{algorithm}
In \Cref{alg:tuning}, $T>0$ is a time epoch and $k$ is a given integer and $\alpha_{\max}>0$ is the maximum step-size that is allowable. From the Gronwall-Bellman lemma it follows that in \Cref{alg:tuning} $\norm{\theta_t}\leq C(1+e^{\beta t})$ with some $C>0$, where the sign of $\beta$ determines whether the iterates are bounded. 
Using this fact, we observe that the sequence $r_i=\frac{\normsm{\thh_{(t-kT+iT)\wedge 0}}}{\normsm{\thh_{(t-kT+(i-1)T)\wedge 0}}},i=1,\ldots,k$ should be ``roughly'' (making allowance for the persistent noise) decreasing and converge to $1$ when the step-size is large enough so that the iterates stay bounded and eventually converge. 
The idea is that the $IsUnstable()$ routine in \Cref{alg:tuning} calculates $\{r_i\}_i$ based on its input and returns true when any of these is larger than a preset constant $c>1$. By choosing a larger the constant $c$, the probability of false detection of a run-away event decreases rapidly, while still controlling for the probability of altogether missing a run-away event.\par
We ran numerical experiments on the class with $A_{P}=\begin{bmatrix} 1 &-10\\ 10 &1\end{bmatrix}$, $\sigma_b=0$ and $b_t=b,\forall t\geq 0$ (chosen such that $\ts=(1,1)^\top$) and $M_t,\,t \geq 0$ with varying $\sigma_A$'s. This problem class does not admit an apriori step-size (due to the unknown $\sigma_A$ and the dependence of step-size on $\sigma_A$) that prevents the explosion of MSE.
The results (see \Cref{fig:tune}) show that \Cref{alg:tuning} does find a problem dependent constant step-size  (within a factor of the best possible hand computed step-size) that avoids the MSE blow up. We chose $k=2$ and $T=5$, the preset constant was chosen to be $1.025$ and the results are for $\sigma_A=0,2,5,10,20$.
\Cref{alg:tuning} is oblivious of the data distribution, and the hand computed step-size is based on full problem information (i.e., $\sigma_A$).
 Further, the results (in the right plot of \Cref{fig:tune}) also confirm our expectation that higher step-sizes lead to faster convergence.

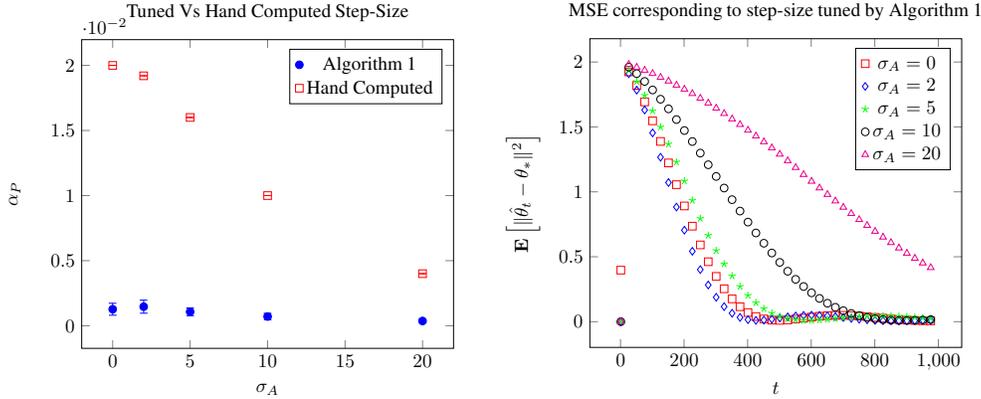
\begin{figure}
\resizebox{\columnwidth}{!}{
\begin{tabular}{cc}
\begin{tikzpicture}[scale=0.5]
\begin{axis}[
xlabel=$\sigma_A$,
ylabel=$\alpha_P$, legend pos= north east,
title= Tuned Vs Hand Computed Step-Size
]
\addplot[only marks, blue, mark=*, mark options={blue}, error bars/.cd,y dir=both,  y explicit] table [x index=0, y index=1, y error index=2]{./hur};
\addplot[only marks, red, mark=square, mark options={red}, error bars/.cd,y dir=both,  y explicit] table [x index=0, y index=1, y error index=2]{./hur_act};
\addlegendentry{\Cref{alg:tuning}}
\addlegendentry{Hand Computed}
\end{axis}
\end{tikzpicture}

&
\begin{tikzpicture}[scale=0.5]
\begin{axis}[
xlabel=$t$,
ylabel=$\EE{\normsm{\thh_t-\ts}^2}$, legend pos=north east,
title= MSE corresponding to step-size tuned by \Cref{alg:tuning}
]
\addplot[only marks,mark=square,red,each nth point=25] plot file {./hur1};
\addplot[only marks,mark=diamond,blue,each nth point=25] plot file {./hur2};
\addplot[only marks,mark=star,green,each nth point=25] plot file {./hur3};
\addplot[only marks,mark=o,black,each nth point=25] plot file {./hur4};
\addplot[only marks,mark=triangle,magenta,each nth point=25] plot file {./hur5};
\addlegendentry{$\sigma_A=0$}
\addlegendentry{$\sigma_A=2$}
\addlegendentry{$\sigma_A=5$}
\addlegendentry{$\sigma_A=10$}
\addlegendentry{$\sigma_A=20$}

\end{axis}
\end{tikzpicture}
\end{tabular}
}
\caption{The left plot shows comparison of the constant step-size $\alpha_P$ as function of $\sigma_A$ found by \Cref{alg:tuning} versus the constant step-size computed in closed form.  The right plot shows the performance of LSA with CS-PR (with the step-size choosen by \Cref{alg:tuning}) for various $\sigma_A$ values. The errors were insignificant and hence error bars are not shown in the right plot.}
\label{fig:tune}
\end{figure}

\section{Conclusion}
We presented a finite time performance analysis of LSAs with CS-PR and showed that the MSE decays at a rate $O(\frac{1}{t})$. Our results extended the analysis of \citet{bach} for SGD with CS-PR for the problem of linear least-squares estimation and \iid sampling to general LSAs with CS-PR. Due to the lack of special structures, our analysis for the case of general LSA cannot recover entirely the results of \citet{bach} who use the problem specific structures in their analysis.
Our results also improved the rates in the case of the GTD class of algorithms. We presented conditions under which a constant step-size can be chosen uniformly for a given class of data distributions. We showed a negative result in that not all data distributions `admit' such a constant step-size. This is a negative result from the perspective of TD algorithms in RL. We also argued that a problem dependent constant step-size can be obtained in an automatic manner and presented numerical experiments on a synthetic LSA. 

\bibliographystyle{plainnat}
\bibliography{ref}
\appendix
\section{Linear Algebra Preliminaries}\label{sec:appendix}
\subsection{Additional Notations}\label{sec:addnot}
For $x=a+ib\in \C$, we denote its real and imaginary parts by $\re{x}=a$ and $\im{x}=b$ respectively. Given a $x\in \C^d$, for $1\leq i \leq d$, $x(i)$ denotes the $i^{th}$ component of $x$.
For any $x\in \C$ we denote its modulus $\md{x}=\sqrt{\re{x}^2+\im{x}^2}$ and its complex conjugate by $\bar{x}=a-ib$.
We use $A\succeq 0$ to denote that the
square matrix $A$ is Hermitian and positive semidefinite (HPSD):
$A = A^*$, $\inf_x x^* A x\ge 0$. We use $A\succ 0$ to denote that the square matrix $A$ is Hermitian and positive definite (HPD): $A=A^*$, $\inf_x x^* A x > 0$.
For $A,B$ HPD matrices, $A\succeq B$ holds if $A-B\succeq 0$.
We also use $A\succ B$ similarly to denote that $A-B \succ 0$.
We also use $\preceq$ and $\prec$ analogously. We denote the smallest eigen value of a real symmetric positive definite matrix $A$ by $\lambda_{\min}(A)$.\par
We now present some useful results from linear algebra.

Let $B$ be a $\dcd$ block diagonal matrix given by $B=\begin{bmatrix} B_1 &0 &0 &\ldots &0 \\ 0 &B_2 &0 &\ldots &0  \\ \vdots &\vdots &\vdots &\vdots &\vdots \\ 0 &\ldots &0 &0 &B_k \end{bmatrix}$, where $B_i$ is a $d_i \times d_i$ matrix such that $d_i<d,\,\forall i=1,\ldots,k$ (w.l.o.g) and $\sum_{i=1}^k d_i=d$. We also denote $B$ as
\begin{align*}
B=B_1 \op B_2 \op \ldots B_k=\op_{i=1}^k B_i
\end{align*}

\subsection{Results in Matrix Decomposition and Transformation}
We will now recall Jordon decomposition.
\begin{lemma}\label{jordon}
Let $A\in \C^{\dcd}$ and $\{\lambda_i\in \C,i=1,\ldots,k\leq d \}$ denote its $k$ distinct eigenvalues.
There exists a complex matrix $V\in \C^{\dcd}$ such that $A=V\tL V^{-1}$, where
$\tL=\tL_1\op\ldots\op\tL_k$, where each $\tL_i,\,i=1,\ldots,k$ can further be written as $\tL_i= {\tL}^i_{1}\op \ldots \op {\tL}^i_{{l(i)}}$. Each of ${\tL}^i_{j},j=1,\ldots,l(i)$ is a $d^i_j\times d^i_j$ square matrix such that $\sum_{j=1}^{l(i)} d^i_j =d_i$ and has the special form given by
${\tL}^i_{j}=\begin{bmatrix} \lambda_i &1 &0 &\ldots &0 &0\\ 0 &\lambda_i &1 &0 &\ldots &0 \\ 0 &\vdots &\vdots &0 &\lambda_i &1 \\ 0 &\ldots &0 &0 &0 &\lambda_i \end{bmatrix}$.
\end{lemma}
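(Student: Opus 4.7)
The plan is to use the primary decomposition theorem, which follows from the Cayley--Hamilton theorem applied over the algebraically closed field $\C$, to reduce the statement to the case of a single eigenvalue (where $A - \lambda \I$ is nilpotent on the generalized eigenspace), and then to establish the Jordan basis for each nilpotent restriction by decomposing the space into a direct sum of cyclic subspaces.

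First, I would factor the characteristic polynomial as $\chi_A(x) = \prod_{i=1}^{k} (x - \lambda_i)^{m_i}$, where the $\lambda_i$ are the distinct eigenvalues with algebraic multiplicities $m_i$, and define the generalized eigenspaces $E_i = \ker\bigl((A - \lambda_i \I)^{m_i}\bigr)$. Since the polynomials $(x - \lambda_i)^{m_i}$ are pairwise coprime in $\C[x]$, the Chinese Remainder Theorem together with Cayley--Hamilton yields $\C^d = E_1 \oplus \cdots \oplus E_k$, with each $E_i$ being $A$-invariant. Choosing a basis of $\C^d$ that respects this direct sum produces the outer block decomposition $\tL = \tL_1 \op \cdots \op \tL_k$ in the statement.

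Next, on each generalized eigenspace $E_i$, the operator $N_i = (A - \lambda_i \I)|_{E_i}$ is nilpotent (with $N_i^{m_i} = 0$), so it suffices to exhibit a basis of $E_i$ in which $N_i$ takes Jordan-nilpotent block form; the corresponding blocks of $A|_{E_i} = N_i + \lambda_i \I$ will then have the exact shape $\tL_j^i$ claimed. The standard way to produce such a basis is induction on $\dim E_i$: pick a complement of $\ker N_i$ inside $E_i$, observe that $N_i$ restricted to its image $N_i(E_i)$ is nilpotent of strictly smaller rank, apply the induction hypothesis there to get chains $\{w, N_i w, \dots, N_i^{r-1} w\}$, lift the top of each chain to a preimage in $E_i$ (making the chains one longer), and then extend by suitably chosen vectors from $\ker N_i$ so as to span all of $E_i$. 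Reading the coordinates in reverse order along each chain makes $N_i$ into a direct sum of nilpotent Jordan blocks of sizes $d^i_j$ with $\sum_j d^i_j = \dim E_i$, and the dimension count $\dim E_i = m_i$ with $\sum_i m_i = d$ makes the global bookkeeping match.

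Concatenating the Jordan bases across the $E_i$'s and assembling them column-wise produces the matrix $V \in \gld$ such that $V^{-1} A V = \tL$ has the nested block form stated in \cref{jordon}. The main technical obstacle lies in the nilpotent step: one has to verify that the lifted chains, when combined with the extra vectors chosen from $\ker N_i$, are linearly independent and span $E_i$. This is the one genuinely non-trivial linear-algebraic argument, typically handled by a careful dimension count using $\dim E_i = \dim \ker N_i + \dim N_i(E_i)$ together with the observation that the lifted chains intersect $\ker N_i$ only in their bottom elements. Everything else---primary decomposition, the reduction to the nilpotent case, and the change of basis---is routine once Cayley--Hamilton and the $\C[x]$ version of the Chinese Remainder Theorem are invoked.
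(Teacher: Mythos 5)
Your outline is a correct rendering of the standard proof of the Jordan canonical form: primary decomposition of $\C^d$ into generalized eigenspaces via Cayley--Hamilton and coprimality of the factors $(x-\lambda_i)^{m_i}$, followed by the cyclic-chain (nilpotent Jordan basis) construction on each $E_i$, with the linear-independence bookkeeping you flag being exactly the one delicate step. The paper does not prove \cref{jordon} at all---it merely recalls it as a classical fact before using it in \cref{lm:simtran}---so there is no in-paper argument to compare against; your sketch supplies the standard argument and is sound.
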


\begin{lemma}\label{lm:simtran}
Let $A\in \C^{\dcd}$ be a Hurwitz matrix. There exists a matrix $U\in \gln$ such that $A=U\Lambda U^{-1}$ and $\Lambda^*+\Lambda$ is a real symmetric positive definite matrix.
\end{lemma}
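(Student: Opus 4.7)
The plan is to start from the Jordan decomposition of $A$ guaranteed by Lemma~\ref{jordon} and then shrink the unit entries on the superdiagonals by a diagonal similarity, so that the resulting matrix is ``almost diagonal'' and therefore has a positive definite Hermitian part.

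Concretely, write $A = V \tilde{\Lambda} V^{-1}$ as in Lemma~\ref{jordon}, where $\tilde\Lambda = \op_{i,j}\tilde L^i_j$ is the block-diagonal collection of Jordan blocks with diagonal entries equal to the eigenvalues $\lambda_i$ of $A$ and $1$'s on the first superdiagonal of each block. Fix a parameter $\epsilon > 0$ to be chosen, and for every Jordan block $\tilde L^i_j$ of size $m = d^i_j$ define the diagonal scaling $D^i_j(\epsilon) = \mathrm{diag}(1,\epsilon,\epsilon^2,\dots,\epsilon^{m-1})$. Let $D(\epsilon)$ be the block-diagonal assembly of all these $D^i_j(\epsilon)$, set $U = V D(\epsilon)$, and define $\Lambda = D(\epsilon)^{-1}\tilde{\Lambda} D(\epsilon)$, so that $A = U\Lambda U^{-1}$. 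A direct computation shows that $\Lambda$ has the same diagonal as $\tilde{\Lambda}$ (namely the eigenvalues $\lambda_i$ with their algebraic multiplicities), while each $1$ on a superdiagonal of $\tilde\Lambda$ is replaced by $\epsilon$; all other entries are zero.

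Next I would analyze $\Lambda^* + \Lambda$. Its $(k,k)$ entry is $2\,\re{\lambda_{i(k)}}$, which is real and, by the Hurwitz hypothesis, lower bounded by $2\mu > 0$, where $\mu \eqdef \min_i \re{\lambda_i}$. Its off-diagonal entries are either $0$ or $\epsilon$, coming from a superdiagonal of $\Lambda$ or its conjugate transpose; in particular they are all real, so $\Lambda^* + \Lambda$ is a real symmetric matrix. Choosing $\epsilon < \mu$ (so that each off-diagonal row-sum of $\Lambda^* + \Lambda$ is strictly less than the corresponding diagonal entry $2\mu$), the Gershgorin circle theorem applied to the real symmetric matrix $\Lambda^* + \Lambda$ yields that every eigenvalue is at least $2\mu - 2\epsilon > 0$, so $\Lambda^* + \Lambda \succ 0$.

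The main thing to be careful about is the bookkeeping across multiple Jordan blocks and the verification that the off-diagonal entries of $\Lambda^* + \Lambda$ are indeed real (so that the matrix is \emph{real} symmetric rather than only Hermitian); once one observes that each such entry comes either from a superdiagonal ($=\epsilon$) or a subdiagonal of $\Lambda^*$ ($=\epsilon$ as well, since $\epsilon$ is real), there is nothing more to check. No serious obstacle is expected beyond tracking this structure and choosing $\epsilon$ sufficiently small.
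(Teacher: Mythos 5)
Your proposal is correct and follows essentially the same route as the paper: Jordan decomposition (Lemma~\ref{jordon}) followed by a block-diagonal similarity that shrinks the superdiagonal $1$'s, after which the Hermitian part of the transformed matrix is real symmetric with positive diagonal and is checked to be positive definite. The only differences are cosmetic: the paper scales each block by powers of $\re{\lambda_i}$ (so the superdiagonal becomes $\re{\lambda_i}$ and positivity is verified by an explicit sum-of-squares computation on the resulting tridiagonal block), whereas you scale by a uniform $\epsilon < \min_i \re{\lambda_i}$ and conclude via strict diagonal dominance / Gershgorin, which is an equally valid (and arguably slightly cleaner) finish.
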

\begin{proof}
It is trivial to see that for any $\Lambda\in \C^{\dcd}$, $\left(\Lambda^*+\Lambda\right)$ is Hermitian. We will use the decomposition of $A=V \tL V^{-1}$ in \Cref{jordon} and also carry over the notations in \Cref{jordon}. Consider the diagonal matrices $D^i_j=\begin{bmatrix} 1  &0 &0 &\ldots &0 &0\\ 0 &\re{\lambda_i} &0 &0 &\ldots &0 \\ 0 &\vdots &\vdots &0 &\re{\lambda_i}^{d^i_j-1} &0 \\ 0 &\ldots &0 &0 &0 &\re{\lambda_i}^{d^i_j} \end{bmatrix},\,\forall j=1,\ldots,l(i)$, $D^i=D^i_1 \op\ldots\op D^i_{l(i)},\,\forall i=1,\ldots,k$ and $D=D^1 \op\ldots\op D^k$.
It follows that $A=(VD) \Lambda (VD)^{-1}$, where $\Lambda$ is a matrix such that
$\Lambda=\Lambda_1 \op \ldots \op \Lambda_k$, where each $\Lambda_i,\,i=1,\ldots,k$ can further be written as
$A_i=\Lambda^i_{1} \op \ldots \op \Lambda^i_{{l(i)}}$. Each of $\Lambda^i_{j}$ is a $d^i_j\times d^i_j$ square matrix with the special form given by
$\Lambda^i_{j}=\begin{bmatrix} \lambda_i &\re{\lambda_i} &0 &\ldots &0 &0\\ 0 &\lambda_i &\re{\lambda_i} &0 &\ldots &0 \\ 0 &\vdots &\vdots &0 &\lambda_i &\re{\lambda_i} \\ 0 &\ldots &0 &0 &0 &\lambda_i \end{bmatrix}$.

Now we have $\frac{(\Lambda^*+\Lambda)}{2}=\op_{i=1}^k \op_{j=1}^{l(i)}\frac{\Lambda^{i*}_{j}+\Lambda^i_{j}}{2}$, where $\frac{\Lambda^{i*}_{j}+\Lambda^i_{j}}{2}=\begin{bmatrix} \re{\lambda_i} &\frac{\re{\lambda_i}}{2} &0 &\ldots &0 &0\\ \frac{\re{\lambda_i}}{2} &\re{\lambda_i} &\frac{\re{\lambda_i}}{2} &0 &\ldots &0 \\ 0 &\vdots &\vdots &0 &\re{\lambda_i} &\frac{\re{\lambda_i}}{2} \\ 0 &\ldots &0 &0 &\frac{\re{\lambda_i}}{2} &\re{\lambda_i} \end{bmatrix} $. Then for any $x=(x(i),i=1,\ldots,d)\in \C^d (\neq \mathbf{0})$, we have 
\begin{align*}
x^* \frac{(\Lambda^*+\Lambda)}{2} x &=\re{\lambda_i} \left(\sum_{i=1}^d \bar{x}{(i)} x(i)+\sum_{i=1}^{d-1} \frac{\bar{x}(i) x(i+1) + x(i)\bar{x}(i+1)}{2}\right) \\
&=\frac{\re{\lambda_i}}{2}\left(\md{x(1)}^2+ \md{x(d)}^2\right)+\frac{\re{\lambda_i}}{2}\left( \sum_{i=1}^{d-1} \md{x(i)}^2+\bar{x}(i) x(i+1) + x(i)\bar{x}(i+1)+\md{x(i+1)}^2 \right)\\
&>\frac{\re{\lambda_i}}{2}\left(\sum_{i=1}^d \md{x(i)+x(i+1)}^2 \right)\\
&> 0
\end{align*}
\end{proof}
\section{Proofs}\label{sec:proofs}
\subsection{LSA with CS-PR for Positive Definite Distributions}
In this subsection, we re-write \eqref{eq:lsa} and \Cref{assmp:lsa} to accomodate complex number computations and in addition assume that $P$ is \emph{positive definite}. To this end,
\begin{subequations}\label{eq:lsacmplx}
\begin{align}
\label{conststep}&\text{LSA:} &\theta_t&=\theta_{t-1}+\alpha(b_t-A_t\theta_{t-1}),\\
\label{iteravg}&\text{PR-Average:} &\thh_t&=\frac{1}{t+1}{\sum}_{i=0}^{t}\theta_i,
\end{align}
\end{subequations}
where $\thh_t, \theta_t \in \C^{d}$. We now assume,
\begin{assumption}\label{assmp:lsacmplx}
\begin{enumerate}[leftmargin=*, before = \leavevmode\vspace{-\baselineskip}]
\item \label{dist} $(b_t, A_t)\sim (P^b,P^A), t\geq 0$ is an \iid sequence, where $P^b$ is a distribution over $\C^d$ and $P^A$ is a distribution over $\C^{\dcd}$. We assume that $P$ is positive definite.
\item \label{matvar} The martingale difference sequences\footnote{$\EE{M_t|\F_{t-1}}=0$ and $\EE{N_t|\F_{t-1}}=0$} $M_t\eqdef A_t-A_{P}$ and $N_t\eqdef b_t-b_{P}$ associated with $A_t$ and $b_t$ satisfy the following
\begin{align*}\E\left[ \norm{M_t}^2\mid\F_{t-1}\right]\leq \sigma^2_{A_P}, \, \E[N_t^* N_t]=\sigma^2_{b_P}.\end{align*}
\item $A_P$ is invertible and there exists a $\ts=A^{-1}_Pb_P$.
\end{enumerate}
\end{assumption}

We now define the error variables and present the recurison for the error dynamics. In what follows, definitions in \Cref{sec:def} and \Cref{sec:prob} continue to hold.
\begin{definition}\label{def:err}
\begin{itemize}[leftmargin=*, before = \leavevmode\vspace{-\baselineskip}]
\item Define error variables $e_t\eqdef \theta_t-\ts$ and $\eh_t\eqdef \thh_t-\ts$.
\item Define $\forall\, t\geq 0$ random vectors $\zeta_t\eqdef b_t-b-(A_t-A_P)\ts$.
\item Define constants $\sigma_1^2\eqdef\sigma_A^2\norm{\ts}^2+\sigma_b^2$ and $\sigma_2^2\eqdef\sigma_A^2\norm{\ts}$. Note that $\EE{\norm{\zeta_t}^2}\leq \sigma_1^2$ and $\EE{\norm{M_t\zeta_t}}\leq \sigma_2^2$.
\item Define $\forall\,i\geq j$, the random matrices $F_{i,j}=(I-\alpha A_i)\ldots (I-\alpha A_j)$ and $\forall,\,i<j$ $F_{i,j}=\I$.
\end{itemize}
\end{definition}

\paragraph{Error Recursion} Let us now look at the dynamics of the error terms defined by
\begin{align}\label{eq:errrec}
\begin{split}
\theta_t&=\theta_{t-1}+\alpha\big(b_t-A_t\theta_{t-1}\big)\\
\theta_t-\ts&=\theta_{t-1}-\ts+\alpha\big(b_t-A_t(\theta_{t-1}-\ts+\ts)\big)\\
e_t&=(I-\alpha A_t)e_{t-1}+\alpha(b_t -b -(A_t-A)\ts)\\
e_t&=(I-\alpha A_t)e_{t-1}+\alpha\zeta_t
\end{split}
\end{align}

\begin{lemma}\label{lm:pd}
Let $P$ be a distribution over $\C^d\times \C^{\dcd}$ satisfying \Cref{assmp:lsacmplx}, then there exists an $\alpha_P>0$ such that $\rhod{P}>0$ and $\rhos{P}>0,~\forall \alpha \in (0,\alpha_P)$.
\end{lemma}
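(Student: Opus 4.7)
The plan is to exploit positive definiteness of $A_P$ together with operator norm bounds on the quadratic forms appearing in $\rho_s$ and $\rho_d$. Since $\rho_d(\alpha,P) \ge \rho_s(\alpha,P)$ (as noted right after Definition \ref{def:dist}), it suffices to produce $\alpha_P>0$ making $\rho_s(\alpha,P)>0$ on $(0,\alpha_P)$; the same $\alpha_P$ will then work for $\rho_d$ too.

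First I would observe that $A_P + A_P^*$ is Hermitian by construction. Because $P$ is positive definite, $\langle x, A_P x\rangle > 0$ for every nonzero $x\in \C^d$, so $\langle x,(A_P+A_P^*)x\rangle = 2\,\emph{re}(\langle x, A_P x\rangle) > 0$. Thus $A_P+A_P^*$ is HPD, and in particular its smallest eigenvalue $\lambda \eqdef \lambda_{\min}(A_P+A_P^*)$ is strictly positive. For every unit vector $x$,
\[
\langle x,(A_P+A_P^*)x\rangle \;\ge\; \lambda \;>\;0.
\]

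Next I would bound the ``curvature'' term $\langle x, C_P x\rangle$. Using $A_t = A_P + M_t$ and the martingale-difference assumption on $M_t$ from \Cref{assmp:lsacmplx}, we have
\[
C_P \;=\; \E\left[A_t^* A_t\right] \;=\; A_P^* A_P + \E\left[M_t^* M_t\right],
\]
so that $\norm{C_P} \le \norm{A_P}^2 + \sigma_{A_P}^2$. Consequently, for any unit $x$,
\[
\langle x, \bigl((A_P+A_P^*)-\alpha C_P\bigr) x\rangle \;\ge\; \lambda \;-\; \alpha\bigl(\norm{A_P}^2+\sigma_{A_P}^2\bigr).
\]
Taking the infimum over unit vectors gives $\rhos{P} \ge \lambda - \alpha(\norm{A_P}^2+\sigma_{A_P}^2)$, and an analogous (actually easier) argument bounds $\rhod{P} \ge \lambda - \alpha \norm{A_P}^2$.

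Therefore, setting
\[
\alpha_P \;\eqdef\; \frac{\lambda_{\min}(A_P+A_P^*)}{\norm{A_P}^2+\sigma_{A_P}^2}\;>\;0,
\]
for every $\alpha\in(0,\alpha_P)$ we obtain $\rhos{P}>0$ and $\rhod{P}\ge \rhos{P}>0$, as claimed. There is no real obstacle here; the only point worth flagging is the decomposition $C_P = A_P^* A_P + \E[M_t^* M_t]$, which lets the (possibly large) noise variance $\sigma_{A_P}^2$ appear only in the denominator of the threshold $\alpha_P$ rather than as an obstruction to positivity of the Hermitian part.
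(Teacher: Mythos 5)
Your proof is correct and follows essentially the same route as the paper's: decompose $C_P = A_P^* A_P + \E[M_t^* M_t]$ using the martingale-difference property, lower-bound the Hermitian part by $\lambda_{\min}(A_P+A_P^*)$, and choose $\alpha_P = \lambda_{\min}(A_P+A_P^*)/(\norm{A_P}^2+\sigma_{A_P}^2)$. The only cosmetic difference is that you handle $\rhod{P}$ via the inequality $\rhod{P}\ge\rhos{P}$ rather than repeating the bound, which is fine.
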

\begin{proof}
\begin{align*}
\rhos{P}&\stackrel{(a)}{=}\inf_{x:\norm{x}=1}x^* (A_P^*+A_P)x -\alpha x^*\EE{A_t^* A_t} x\\
&\stackrel{(b)}{=}\inf_{x:\norm{x}=1}x^* (A_P^*+A_P)x -\alpha x^* A^*_P A_P -\alpha x^* \EE{M_t^* M_t} x\\
&\stackrel{(c)}{\geq} \lambda_{\min}(A^*_P+A_P)-\alpha \norm{A_P}^2-\sigma^2_A
\end{align*}
The proof is complete by choosing $\alpha_P<\frac{\lambda_{\min}(A^*_P+A_P)}{\norm{A_P}^2+\sigma^2_A}$. Here $(a)$ follows from definition of $\rhos{P}$ in \Cref{def:dist}, $(b)$ follows from the fact that $M_t$ is a martingale difference term (see \Cref{assmp:lsacmplx}) and $(c)$ follows from the fact that for a real symmetric matrix $M$ the smallest eigen value is given by $\lambda_{\min}=\inf_{x:\norm{x}=1} x^* M x $.
\end{proof}

\begin{lemma}[Product unroll lemma]\label{lem:genunroll}
Let $t>i\ge 1$, $x,y\in \C^d$ be $\F_{i}$-measurable random vectors. Then,
\begin{align*}
\E[x^* F_{t,i+1}y|\F_i]=x^* (I-\alpha A_P)^{t-i} y\,.
\end{align*}
\end{lemma}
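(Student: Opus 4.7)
The plan is to proceed by induction on $n = t-i \geq 1$, exploiting two structural facts: the i.i.d.\ assumption on $(A_s)_s$ (so $A_s$ is independent of $\F_{s-1}$ with $\E[A_s\mid\F_{s-1}] = A_P$), and the fact that for any $s\ge i$, the quantities $x, y$, and $F_{s,i+1}$ are all $\F_s$-measurable (since $F_{s,i+1}$ depends only on $A_{i+1},\ldots,A_s$).

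For the base case $n=1$, i.e., $t=i+1$, we have $F_{i+1,i+1} = I-\alpha A_{i+1}$, so by $\F_i$-measurability of $x,y$ and linearity of conditional expectation, $\E[x^* F_{i+1,i+1} y \mid \F_i] = x^*\bigl(I - \alpha\,\E[A_{i+1}\mid\F_i]\bigr)y = x^*(I-\alpha A_P)y$, as required.

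For the inductive step, I would peel off the leftmost factor and apply the tower property with the intermediate $\sigma$-field $\F_{t-1}$. Writing $F_{t,i+1} = (I-\alpha A_t)\,F_{t-1,i+1}$, the tower property gives
\begin{align*}
\E[x^* F_{t,i+1} y \mid \F_i] = \E\bigl[\,\E[x^*(I-\alpha A_t)F_{t-1,i+1}y \mid \F_{t-1}]\,\bigm|\, \F_i\bigr].
\end{align*}
Since $x$, $F_{t-1,i+1}$, and $y$ are all $\F_{t-1}$-measurable, and $A_t$ is independent of $\F_{t-1}$ with $\E[A_t\mid\F_{t-1}]=A_P$, the inner conditional expectation collapses to $x^*(I-\alpha A_P)F_{t-1,i+1}y = \tilde{x}^* F_{t-1,i+1} y$, where $\tilde{x} \eqdef (I-\alpha A_P)^* x$ is still $\F_i$-measurable. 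Applying the induction hypothesis with $\tilde x$ in place of $x$ and length $n-1 = (t-1)-i$,
\begin{align*}
\E[\tilde{x}^* F_{t-1,i+1} y \mid \F_i] = \tilde{x}^* (I-\alpha A_P)^{t-1-i} y = x^*(I-\alpha A_P)^{t-i} y,
\end{align*}
closing the induction.

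There is no substantive obstacle here: the argument is essentially a bookkeeping exercise in iterated conditioning. The only thing to be careful about is the order of the factors in $F_{t,i+1}$ (the most recent matrix on the left) and the corresponding direction in which factors are peeled off, together with noting that the reshaped vector $\tilde x$ remains $\F_i$-measurable so that the inductive hypothesis applies verbatim.
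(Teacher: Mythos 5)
Your proof is correct and is essentially the same argument as the paper's: the paper peels off the leftmost factor $(I-\alpha A_t)$ by conditioning on $\F_{t-1}$, uses $\E[I-\alpha A_t\mid\F_{t-1}]=I-\alpha A_P$, and then iterates the tower rule down to $\F_i$, which is exactly your induction written informally ("continuing this way"). Your formalization via the $\F_i$-measurable vector $\tilde{x}=(I-\alpha A_P)^*x$ is just a cleaner bookkeeping of that same step.
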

\begin{proof}
By the definition of $F_{t,i+1}$,
and because $F_{t-1,i+1} = (I-\alpha A_{t-1}) \dots (I-\alpha A_{i+1})$ is $\F_{t-1}$-measurable,
as are $x$ and $y$,
\begin{align*}
\EE{x^* F_{t,i+1} y | \F_{t-1} } &= x^\top \EE{ (I-\alpha A_t) | \F_{t-1} } F_{t-1,i+1} y\\
&=x^*  (I-\alpha A_P)  F_{t-1,i+1} y\,.
\end{align*}
By the tower-rule for conditional expectations and our measurability assumptions,
\begin{align*}
\EE{x^* F_{t,i+1} y | \F_{t-2} }
&=x^*  (I-\alpha A_P)  \EE{F_{t-1,i+1} |\F_{t-2}} y\\
&= x^* (I-\alpha A_P)^2 F_{t-2,i+1} y\,.
\end{align*}
Continuing this way we get
\begin{align*}
\EE{x^* F_{t,i+1} y | \F_{t-j} }
= x^* (I-\alpha A_P)^j F_{t-j,i+1} y\,, \quad j=1,2,\dots,t-i\,.
\end{align*}
Specifically, for $j=t-i$ we get
\begin{align*}
\EE{x^* F_{t,i+1} y | \F_{i} }  = x^* (I-\alpha A_P)^{t-i} y\,.
\end{align*}
\end{proof}

\begin{lemma}\label{noisecancel}
Let $t>i\ge 1$ and let $x\in \C^d$ be a $\F_{i-1}$-measurable random vector. Then,
$\E[x^* F_{t,i+1}\zeta_{i}]=0$.
\end{lemma}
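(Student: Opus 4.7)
The plan is to combine the Product unroll lemma (Lemma \ref{lem:genunroll}) with the martingale-difference property of $\zeta_i$. The key observation is that although $\zeta_i$ depends on $A_i$ (so cannot be pulled out of the expectation over $F_{t,i+1}$ unconditionally), it is $\F_i$-measurable, which is exactly what the Product unroll lemma requires of its second argument.

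First, I would note that $\zeta_i = N_i - M_i \ts$ is $\F_i$-measurable by construction, and that $x$, being $\F_{i-1}$-measurable, is also $\F_i$-measurable since $\F_{i-1}\subseteq \F_i$. Hence Lemma \ref{lem:genunroll} applies with the choice $y = \zeta_i$, yielding
\begin{align*}
\E[x^* F_{t,i+1}\zeta_i \mid \F_i] = x^* (I-\alpha A_P)^{t-i} \zeta_i.
\end{align*}

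Next, I would take a further conditional expectation with respect to $\F_{i-1}$. Since $x$ is $\F_{i-1}$-measurable and $(I-\alpha A_P)^{t-i}$ is deterministic,
\begin{align*}
\E[x^* (I-\alpha A_P)^{t-i} \zeta_i \mid \F_{i-1}] = x^* (I-\alpha A_P)^{t-i}\, \E[\zeta_i \mid \F_{i-1}].
\end{align*}
By \Cref{assmp:lsa}(\ref{matvar}), $\E[M_i\mid \F_{i-1}] = 0$ and $\E[N_i\mid \F_{i-1}] = 0$, so $\E[\zeta_i\mid \F_{i-1}] = 0$. Applying the tower rule then gives $\E[x^* F_{t,i+1}\zeta_i] = 0$, as claimed.

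There is no real obstacle here: the argument is a direct two-step conditioning that leverages the previously established unroll identity to separate the dependence on $\{A_{i+1},\dots,A_t\}$ from the noise term at step $i$, after which the martingale-difference property finishes the proof.
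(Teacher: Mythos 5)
Your proof is correct and follows essentially the same route as the paper: apply \Cref{lem:genunroll} to reduce the conditional expectation given $\F_i$ to $x^*(I-\alpha A_P)^{t-i}\zeta_i$, then condition on $\F_{i-1}$ and use $\E[\zeta_i\mid\F_{i-1}]=0$ together with the tower rule. Your added remarks on the $\F_i$-measurability of $\zeta_i$ and of $x$ simply make explicit what the paper's terser proof leaves implicit.
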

\begin{proof}
By \Cref{lem:genunroll},
\begin{align*}
\EE{x^* F_{t,i+1} \zeta_i | \F_{i} }  = x^* (I-\alpha A_P)^{t-i} \zeta_i\,.
\end{align*}
Using the tower rule,
\begin{align*}
\EE{x^* F_{t,i+1} \zeta_i | \F_{i-1} }
= x^* (I-\alpha A_P)^{t-i}\EE{ \zeta_i | \F_{i-1} }= 0\,.
\end{align*}
\end{proof}

\begin{lemma}\label{lem:unroll}
For all $t>i\ge 0$, $\E \ip{e_i,F_{t,i+1} e_i}=\E\ip{e_i,(I-\alpha A_P)^{t-i} e_i}$.
\end{lemma}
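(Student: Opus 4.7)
The plan is to apply the Product Unroll Lemma (\Cref{lem:genunroll}) directly, once the appropriate measurability of $e_i$ is observed. Since $\theta_i$ is generated by the recursion \eqref{eq:errrec} from $\theta_0$ and the data $(A_1,b_1),\ldots,(A_i,b_i)$, the iterate $\theta_i$, and hence the error $e_i = \theta_i-\ts$, is $\F_i$-measurable.

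With this observation in hand, I would instantiate \Cref{lem:genunroll} with $x=y=e_i$, both of which satisfy the required $\F_i$-measurability hypothesis. That lemma then yields
\begin{align*}
\E\bigl[e_i^* F_{t,i+1} e_i \,\big|\, \F_i \bigr] = e_i^* (I-\alpha A_P)^{t-i} e_i\,.
\end{align*}
Taking unconditional expectation on both sides via the tower rule produces
\begin{align*}
\E\ip{e_i, F_{t,i+1} e_i} = \E\ip{e_i,(I-\alpha A_P)^{t-i} e_i}\,,
\end{align*}
which is the claim.

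There is essentially no obstacle here: the lemma is a one-line corollary of \Cref{lem:genunroll}, with the only substantive point being to record that $e_i$ is $\F_i$-measurable so that it is a legitimate choice for both $x$ and $y$ in that lemma. I would therefore keep the write-up to two or three sentences, verifying the measurability claim and then invoking \Cref{lem:genunroll}.
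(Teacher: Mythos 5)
Your proposal is correct and is essentially identical to the paper's own proof: both note that $e_i$ is $\F_i$-measurable (being a function of $\theta_0$ and the data up to time $i$), invoke \Cref{lem:genunroll} with $x=y=e_i$, and take total expectation via the tower rule. Nothing is missing.
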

\begin{proof}
The lemma follows directly from \Cref{lem:genunroll}. Indeed,
$\theta_i$ depends only on $A_1,\dots,A_{i},b_1,\dots,b_{i}$, $\theta_i$ and so is $e_i$ $\F_i$-measurable.
Hence, the lemma is applicable and implies that
\begin{align*}
\EE{ \ip{e_i, F_{t,i+1} e_i} | \F_i } =
\EE{ \ip{e_i, (I-\alpha A_P)^{t-i} e_i} | \F_i }\,.
\end{align*}
Taking expectation of both sides gives the desired result.
\end{proof}

\begin{lemma}\label{innerproduct}
Let $i>j \ge 0$ and let $x\in \R^d$ be an $\F_j$-measurable random vector.
Then,
\begin{align*}
\E\ip{F_{i,j+1}x,F_{i,j+1}x}\leq (1-\alpha \rhos{P})^{i-j}\E\norm{x}^2\,.
\end{align*}
\end{lemma}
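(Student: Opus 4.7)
The plan is to peel off one factor of $(I-\alpha A_i)$ at a time, condition on $\F_{i-1}$ to kill the randomness of $A_i$, and use the definition of $\rhos{P}$ to obtain the per-step contraction factor $1-\alpha\rhos{P}$, then iterate.

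Concretely, I would start by writing $F_{i,j+1} = (I-\alpha A_i)\, F_{i-1,j+1}$ and let $y = F_{i-1,j+1}\,x$, which is $\F_{i-1}$-measurable because $A_{j+1},\dots,A_{i-1}$ and $x$ all are. Then
\begin{align*}
\ip{F_{i,j+1}x, F_{i,j+1}x}
&= y^*\,(I-\alpha A_i)^*(I-\alpha A_i)\, y \\
&= y^*\, y - \alpha\, y^*(A_i + A_i^*)\, y + \alpha^2\, y^* A_i^* A_i\, y\,.
\end{align*}
Conditioning on $\F_{i-1}$ and using $\E[A_i\mid\F_{i-1}] = A_P$ (by \Cref{assmp:lsa}) and $\E[A_i^* A_i\mid\F_{i-1}] = C_P$ (from \Cref{def:dist}), the middle two terms collapse to the deterministic quantities and we get
\begin{align*}
\E\!\left[\ip{F_{i,j+1}x, F_{i,j+1}x}\,\big|\,\F_{i-1}\right]
= y^*\, y - \alpha\, y^*\!\left[(A_P + A_P^*) - \alpha\, C_P\right] y\,.
\end{align*}

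By the definition of $\rhos{P}$, the quadratic form on the right satisfies $y^*[(A_P+A_P^*) - \alpha C_P] y \ge \rhos{P}\,\|y\|^2$, so the conditional expectation above is bounded by $(1-\alpha\rhos{P})\,\|y\|^2 = (1-\alpha\rhos{P})\,\ip{F_{i-1,j+1}x, F_{i-1,j+1}x}$. Taking total expectations gives the one-step recursion
\begin{align*}
\E\ip{F_{i,j+1}x, F_{i,j+1}x} \le (1-\alpha\rhos{P})\,\E\ip{F_{i-1,j+1}x, F_{i-1,j+1}x}\,.
\end{align*}

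Iterating this inequality $i-j$ times (each step introducing one more factor of $1-\alpha\rhos{P}$, and with $F_{j,j+1}=\I$ as the base case) yields the claimed bound $\E\ip{F_{i,j+1}x, F_{i,j+1}x} \le (1-\alpha\rhos{P})^{i-j}\,\E\|x\|^2$. There is no real obstacle here beyond being careful about measurability; the only subtlety worth noting is that the bound is only useful (and the contraction factor genuinely lies in $(0,1]$) when $\alpha$ is small enough that $\rhos{P}>0$ and $\alpha\rhos{P}\le 1$, which is precisely the regime guaranteed by \Cref{lm:hur} (after, if needed, the change of basis to $P_U$).
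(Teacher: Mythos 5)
Your proposal is correct and is essentially the paper's own argument: peel off the last factor $(I-\alpha A_i)$, condition on $\F_{i-1}$ (the paper phrases this via the matrix $S_i=\E[(I-\alpha A_i)^*(I-\alpha A_i)\mid\F_{i-1}]$, whose unit-sphere supremum equals $1-\alpha\rhos{P}$ by the definition of $\rhos{P}$), and iterate the one-step contraction down to $F_{j,j+1}=\I$. One small remark: the caveat $\alpha\rhos{P}\le 1$ you raise is automatic, since $1-\alpha\rhos{P}=\sup_{\norm{x}=1}x^*\E[(I-\alpha A_1)^*(I-\alpha A_1)]x\ge 0$, so the iteration never multiplies by a negative factor.
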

\begin{proof}
Note that
$S_t\doteq \EE{ (I-\alpha A_t)^* (I-\alpha A_t) | \F_{t-1} }
= I - \alpha (A_P^* + A_P) + \alpha^2 \EE{ A_t^* A_t | \F_{t-1} }$.
Since $(b_t,A_t)_t$ is an independent sequence, $\EE{ A_t^* A_t|\F_{t-1}} = \EE{ A_1^* A_1 }$.
Now, using the definition of $\rhos{P}$ from \Cref{def:dist}
$\sup_{x:\norm{x}= 1} x^\top S_t x = 1 - \alpha \inf_{x:\norm{x}=1} x^\top (A_P^* + A_P - \alpha \EE{A_1^\top A_1}) x
= 1-\alpha \rhos{P}$.
Hence,
\begin{align*}
&\EE{\ip{F_{i,j+1}x,F_{i,j+1}x}|\F_{i-1} }\\
&= \EE{x^* F_{i-1,j+1}^\top (I-\alpha A_i)^* (I-\alpha A_i) F_{i-1,j+1} x\,|\,\F_{i-1}}\\
&=(x F_{i-1,j+1})^* \, S_i \, F_{i-1,j+1} x\\
&\le (1-\alpha \rhos{P}) \, \ip{ F_{i-1,j+1} x, F_{i-1,j+1} x} \\
& \le (1-\alpha \rhos{P})^2\, \ip{ F_{i-2,j+1} x, F_{i-2,j+1} x} \\
& \quad \vdots \\
& \le (1-\alpha \rhos{P})^{i-j}\, \norm{x}^2\,.
\end{align*}
\end{proof}

\begin{theorem}\label{th:pdrate}
Let $\eh_t$ be as in \Cref{def:err}. 
Then
\begin{align}
\E[\norm{\eh_t}^2]
\leq \left(1+\frac2{\alpha\rhod{P}}\right)\, \frac1{\alpha\rhos{P}}\, \,
\left(\frac{\norm{e_0}^2}{(t+1)^2}+ \frac{\alpha^2(\sigma_1^2)+\alpha \sigma_2^2\norm{e_0}}{t+1} \right)\,.
\end{align}
\end{theorem}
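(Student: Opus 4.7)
The plan is to expand $(t+1)^2\,\E[\norm{\eh_t}^2] = \E\bigl[\norm{\sum_{i=0}^t e_i}^2\bigr] = \sum_{i,j=0}^t \E\ip{e_i,e_j}$ and reduce the problem to controlling the diagonal sum $\sum_i \E\norm{e_i}^2$ together with a geometric tail of off-diagonal terms. For $i<j$, unrolling the error recursion gives $e_j = F_{j,i+1}\,e_i + \alpha \sum_{k=i+1}^j F_{j,k+1}\,\zeta_k$; since $e_i$ is $\F_i$-measurable and hence $\F_{k-1}$-measurable for every $k>i$, \Cref{noisecancel} kills each $\zeta_k$-contribution in expectation, and \Cref{lem:unroll} reduces the deterministic remainder to $\E\ip{e_i,(I-\alpha A_P)^{j-i}e_i}$. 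Setting $H := I-\alpha A_P$ and observing that $H^* H = I - \alpha\bigl[(A_P+A_P^*) - \alpha A_P^* A_P\bigr]$, the definition of $\rhod{P}$ yields $\norm{H}^2 \le 1-\alpha\rhod{P}$. Cauchy--Schwarz then bounds each off-diagonal term by $\norm{H}^{j-i}\E\norm{e_i}^2$, and the geometric tail $\sum_{k\ge1}\norm{H}^k \le 2/(\alpha\rhod{P})$ gives $\sum_{i,j}\E\ip{e_i,e_j} \le \bigl(1 + \tfrac{2}{\alpha\rhod{P}}\bigr)\sum_i \E\norm{e_i}^2$.

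It remains to bound $\sum_{i=0}^t \E\norm{e_i}^2$. Taking $\E[\cdot\mid\F_{i-1}]$ of the expansion $\norm{e_i}^2 = e_{i-1}^*(I-\alpha A_i)^*(I-\alpha A_i)e_{i-1} + 2\alpha\,\mathrm{Re}\bigl(e_{i-1}^*(I-\alpha A_i)^*\zeta_i\bigr) + \alpha^2\norm{\zeta_i}^2$, the quadratic term contracts by $(1-\alpha\rhos{P})$ directly from the definition of $\rhos{P}$; the cross term evaluates to $-2\alpha^2\mathrm{Re}\bigl(e_{i-1}^*\,\E[M_i^*\zeta_i]\bigr)$, which by $\E\zeta_i=0$ together with the assumption $\E[M_i^* N_i]=0$ becomes $2\alpha^2\mathrm{Re}\bigl(e_{i-1}^*\,\E[M_i^*M_i]\,\ts\bigr)$ and has absolute value at most $2\alpha^2\sigma_A^2\norm{\ts}\norm{e_{i-1}} = 2\alpha^2\sigma_2^2\norm{e_{i-1}}$; the noise-variance term is $\alpha^2\sigma_1^2$. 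This produces the one-step bound
\[
\E\norm{e_i}^2 \;\le\; (1-\alpha\rhos{P})\,\E\norm{e_{i-1}}^2 + 2\alpha^2\sigma_2^2\,\E\norm{e_{i-1}} + \alpha^2\sigma_1^2.
\]
To close the recursion I separately estimate $\E\norm{e_{i-1}}$ via the bias/variance split $e_i = F_{i,1}e_0 + \alpha\sum_{j=1}^i F_{i,j+1}\zeta_j$: by \Cref{innerproduct}, $\E\norm{F_{i,1}e_0}^2 \le (1-\alpha\rhos{P})^i\norm{e_0}^2$, and because the noise cross-terms at distinct $j$ vanish by a conditioning argument in the spirit of \Cref{noisecancel}, the noise part satisfies $\E\norm{\alpha\sum_j F_{i,j+1}\zeta_j}^2 \le \alpha\sigma_1^2/\rhos{P}$. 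Combining these with Jensen's inequality, summing the one-step bound, and dividing by $\alpha\rhos{P}$ yields $\sum_i\E\norm{e_i}^2 \le \bigl(\norm{e_0}^2 + (t+1)(\alpha^2\sigma_1^2 + \alpha\sigma_2^2\norm{e_0})\bigr)/(\alpha\rhos{P})$; plugging this into the first paragraph's inequality and dividing by $(t+1)^2$ produces the claimed bound.

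The main obstacle is the linear-in-$\norm{e_{i-1}}$ term in the one-step recursion for $\E\norm{e_i}^2$: it makes the recursion nonlinear in $\E\norm{e_i}^2$, and a naive Young-type inequality introduces a spurious $\sigma_2^4$ contribution that does not appear in the theorem. The fix---estimating $\E\norm{e_{i-1}}$ via the bias/variance decomposition of $e_i$ before reinserting it into the recursion---is precisely what produces the mixed term $\alpha\sigma_2^2\norm{e_0}$ in the variance part of the final bound, and explains why the assumption $\E[M_t^* N_t]=0$ enters essentially (it is what keeps this cross term linear in $\norm{\ts}$ rather than blowing up through $\sigma_A\sigma_b$).
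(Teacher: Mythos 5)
Your first paragraph matches the paper's own argument for the off-diagonal terms (unroll $e_j$ from $e_i$, kill the noise with \Cref{noisecancel}, reduce to $\E\ip{e_i,(I-\alpha A_P)^{j-i}e_i}$ via \Cref{lem:unroll}, sum the geometric tail to get $1+\tfrac{2}{\alpha\rhod{P}}$), and your one-step inequality $\E\norm{e_i}^2\le(1-\alpha\rhos{P})\E\norm{e_{i-1}}^2+2\alpha^2\sigma_2^2\,\E\norm{e_{i-1}}+\alpha^2\sigma_1^2$ is itself correct. The gap is in how you close this recursion. The paper does not use a one-step recursion at all: it expands $e_i$ all the way back to $e_0$ via \eqref{eq:etft} and bounds three groups of terms, and the crucial structural point is that the $\sigma_2^2$-type cross terms there have the form $\alpha\sum_{j\le i}\E\ip{F_{i,1}e_0,F_{i,j+1}\zeta_j}$, so $\sigma_2^2$ only ever multiplies the (propagated) deterministic initial error $\norm{e_0}$; that is exactly what yields the $\alpha\sigma_2^2\norm{e_0}$ term in the statement.

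In your route, $\sigma_2^2$ is instead coupled to $\E\norm{e_{i-1}}$, which in steady state is of order $\sigma_1\sqrt{\alpha/\rhos{P}}$, not of order of the decaying bias. Concretely, substituting $\E\norm{e_{i-1}}\le(1-\alpha\rhos{P})^{(i-1)/2}\norm{e_0}+\sigma_1\sqrt{\alpha/\rhos{P}}$ (your bias/variance split plus Jensen) into the one-step bound and summing gives, inside the bracket that is later divided by $\alpha\rhos{P}$, a contribution of roughly $\tfrac{4\alpha\sigma_2^2\norm{e_0}}{\rhos{P}}+2t\,\alpha^{5/2}\sigma_1\sigma_2^2/\sqrt{\rhos{P}}$ from the $\sigma_2^2$ term, rather than the theorem's $t\,\alpha\sigma_2^2\norm{e_0}$. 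The first piece carries an extra $1/\rhos{P}$ and exceeds the theorem's allowance for small $t$; the second is a mixed $\sigma_1\sigma_2^2$ term that does not appear in the theorem and is not dominated by $t(\alpha^2\sigma_1^2+\alpha\sigma_2^2\norm{e_0})$ in general (take $\norm{e_0}=0$ and $\alpha$ near the critical value, where $\alpha^{1/2}\sigma_2^2/(\sigma_1\sqrt{\rhos{P}})$ is unbounded); applying Young to it regenerates precisely the $\sigma_2^4$ contribution you set out to avoid. So the claimed conclusion $\sum_i\E\norm{e_i}^2\le\bigl(\norm{e_0}^2+(t+1)(\alpha^2\sigma_1^2+\alpha\sigma_2^2\norm{e_0})\bigr)/(\alpha\rhos{P})$ does not follow from your steps. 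The repair is to treat the diagonal terms as the paper does: expand $e_i=F_{i,1}e_0+\alpha\sum_{j\le i}F_{i,j+1}\zeta_j$, bound the bias via \Cref{innerproduct}, the noise variance via the geometric sum, and bound the bias--noise cross terms $\alpha\sum_j\E\ip{F_{i,1}e_0,F_{i,j+1}\zeta_j}$ directly using $\E\norm{M_j\zeta_j}\le\sigma_2^2$, so that $\sigma_2^2$ only meets $\norm{e_0}$. (Your assertion that noise--noise cross terms at distinct indices vanish is the same simplification the paper itself makes, so I do not count it against you.)
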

\begin{proof}

\begin{align*}
e_t
& = (I-\alpha A_t) (I-\alpha A_{t-1}) e_{t-2}\\ &+ \alpha (I-\alpha A_t) \zeta_{t-1} +\alpha \zeta_t \\
& \quad \vdots\\
& = (I-\alpha A_t) \cdots (I-\alpha A_1) e_0\\ &+ \alpha (I-\alpha A_t) \cdots (I-\alpha A_2) \zeta_1 \\
& + \alpha (I-\alpha A_t) \cdots (I-\alpha A_3) \zeta_2\\
&  \quad \vdots \\
&+ \alpha \zeta_t\,,
\end{align*}
which can be written compactly as
\begin{align}
\label{eq:etft}
e_t = F_{t,1} e_0 + \alpha (F_{t,2} \zeta_1 + \dots + F_{t,t+1} \zeta_t )\,,
\end{align}
\begin{align*}
\eh_t=\frac{1}{t+1}{\sum}_{i=0}^{t}e_i
=\frac{1}{t+1}&\Big\{{\sum}_{i=0}^{t} F_{i,1} e_0 \\
&+ \alpha \sum_{i=1}^{t} \left(\sum_{k=i}^{t} F_{k,i+1} \right)\zeta_i \Big\} ,
\end{align*}
where in the second sum we flipped the order of sums and swapped the names of the variables that the sum runs over.
It follows that \todoc{We should rather use $C$ instead of $H$ here?}
\begin{align*}
\E[\norm{\eh_t}^2]&=\E\ip{\eh_t,\eh_t}
=\frac{1}{(t+1)^2} \sum_{i,j=0}^t \E\ip{e_i,e_j}\,.
\end{align*}
Hence, we see that it suffices to bound $\EE{\ip{ e_i,  e_j }}$.
There are two cases depending on whether $i=j$. When $i< j$,
\begin{align*}
\E\ip{e_i,e_j}
&=\E \ip{e_i,\big[F_{j,i+1} e_i+\alpha\textstyle\sum_{k=i+1}^j F_{j,k+1}\zeta_{k}\big]}\\
&=\E\ip{e_i,F_{j,i+1} e_i}  \text{(from \Cref{noisecancel})}\\
&=\E\ip{e_i, (I-\alpha A)^{j-i} e_i} \text{(from \Cref{lem:unroll})}
\end{align*}
and therefore
\begin{align*}
\label{inter}
\sum_{i=0}^{t-1}\sum_{j=i+1}^t \E\ip{e_i,e_j}
&=\frac1{\alpha\rhod{P}} {\sum}_{i=0}^{t-1}\E\ip{e_i,e_i}\\
&\leq \frac2{\alpha\rhod{P}}{\sum}_{i=0}^{t}\E\ip{e_i,e_i}\,.
\end{align*}
Since $\sum_{i,j}\cdot{} = \sum_{i=j}\cdot{} + 2 \sum_i \sum_{j>i} \cdot{}$,
\begin{align*}
{\sum}_{i=0}^{t}{\sum}_{j=0}^{t} \E\ip{e_i,e_j}&= \left(1+\frac2{\alpha\rhod{P}}\right){\sum}_{i=0}^{t}\E\ip{e_i,e_i}\,.
\end{align*}
Expanding $e_i$ using \eqref{eq:etft} and then using \Cref{innerproduct} and \Cref{assmp:lsacmplx}
\begin{align*}
\E\ip{e_i,e_i}&=\E\ip{F_{i,1}e_0,F_{i,1}e_0}+\alpha^2{\sum}_{j=1}^i\E\ip{ F_{i,j+1}\zeta_j, F_{i,j+1}\zeta_j}+\alpha\sum_{j=1}^i  \E\ip{F_{i,1} e_0, F_{i,j+1}\zeta_j}\\
&\leq (1-\alpha\rhos{P})^i\norm{e_0}^2+ \alpha^2\frac{{\sigma}_1^2}{\alpha \rhos{P}}+ \alpha \frac{{\sigma}^2_2 \norm{e_0}}{\alpha\rhos{P}}\,,
\end{align*}
and so
\begin{align*}
{\sum}_{i=0}^{t}{\sum}_{j=0}^{t} \E\ip{e_i,e_j}
&\leq \left(1+\frac2{\alpha\rhod{P} }\right)\, \frac1{\alpha\rhos{P}}\, (t(\alpha^2{\sigma}_1^2+\alpha {\sigma}^2_2\norm{e_0}) +\norm{e_0}^2)\,.
\end{align*}
Putting things together,
\begin{align}
\E[\norm{\eh_t}^2]
\leq \left(1+\frac2{\alpha\rhod{P}}\right)\, \frac1{\alpha\rhos{P}}\, \,
\left(\frac{\norm{e_0}^2}{(t+1)^2}+ \frac{\alpha^2(\sigma_1^2)+\alpha \sigma_2^2\norm{e_0}}{t+1} \right)\,.
\end{align}
\end{proof}

\paragraph{Proof of \Cref{lm:hur}}
\begin{lemma} 
Let $P$ be a distribution over $\R^d\times \R^{\dcd}$ satisfying \Cref{assmp:lsa}, then there exists an $\alpha_{P_U}>0$ and $U\in \gln$ such that $\rhod{P_U}>0$ and $\rhos{P_U}>0,~\forall \alpha \in (0,\alpha_P)$.
\end{lemma}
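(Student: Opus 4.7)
The plan is to reduce this to the already-proved positive-definite case (\Cref{lm:pd}) by exhibiting a similarity transformation $U$ that converts the Hurwitz problem into a positive-definite one, and then transferring the conclusion to $P_U$.

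First I would invoke \Cref{lm:simtran}: since $A_P$ is Hurwitz by \Cref{assmp:lsa}\ref{dist}, there exists $U\in\gld$ and $\Lambda\in\C^{\dcd}$ with $A_P=U\Lambda U^{-1}$ and $\Lambda+\Lambda^*$ real symmetric positive definite. By \Cref{def:simdist}, the distribution $P_U$ of $(U^{-1}b_t, U^{-1}A_t U)$ has mean matrix $A_{P_U} = U^{-1} A_P U = \Lambda$, so $A_{P_U}+A_{P_U}^*\succ 0$ and thus $P_U$ is a positive-definite distribution in the sense of \Cref{distpd}.

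Next I would verify that $P_U$ inherits the remaining parts of \Cref{assmp:lsa}: the data $(U^{-1}b_t, U^{-1}A_t U)$ is still \iid, and the associated martingale differences $U^{-1}N_t$ and $U^{-1}M_t U$ satisfy the second-moment bounds with modified constants (bounded by $\normsm{U^{-1}}^2\sigma_b^2$ and $\cond{U}^2\sigma_A^2$ respectively), using submultiplicativity of the operator norm. This shows $P_U$ satisfies the complex version \Cref{assmp:lsacmplx} used in \Cref{lm:pd}.

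Then I would apply \Cref{lm:pd} to $P_U$, which directly produces an $\alpha_{P_U}>0$ such that $\rhod{P_U}>0$ and $\rhos{P_U}>0$ for all $\alpha\in(0,\alpha_{P_U})$. Concretely, the proof of \Cref{lm:pd} shows one may take any $\alpha_{P_U}<\lambda_{\min}(\Lambda^*+\Lambda)/(\normsm{\Lambda}^2+\sigma^2_{A_{P_U}})$, and this quantity is strictly positive since $\Lambda+\Lambda^*$ is SPD.

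The main (in fact only) non-routine step is guaranteeing the existence of the similarity $U$ in step 1; this is exactly the content of \Cref{lm:simtran}, which constructs $U$ from a rescaled Jordan basis (the diagonal rescaling $D$ in that lemma's proof shrinks the off-diagonal Jordan entries so that the Hermitian part inherits strict positivity from $\re{\lambda_i}>0$). Once $U$ is in hand, the remaining steps are direct substitutions into the positive-definite case and no further analysis is needed.
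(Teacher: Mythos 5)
Your proposal is correct and follows essentially the same route as the paper: invoke \Cref{lm:simtran} to obtain $U$ with $A_{P_U}=\Lambda$ having a real symmetric positive definite Hermitian part, and then apply \Cref{lm:pd} to the transformed distribution $P_U$ to get the witness $\alpha_{P_U}>0$. Your extra check that $P_U$ inherits the \iid and second-moment conditions (with constants scaled by $\normsm{U^{-1}}$ and $\cond{U}$) is a detail the paper leaves implicit, but it does not change the argument.
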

\begin{proof}
We know that $A_P$ is Hurwitz and from  \Cref{lm:simtran} it follows that there exists an $U\in \gld$ such that  $\Lambda=U^{-1} A_P U$ and $(\Lambda^*+\Lambda)$ is real symmetric and positive definite. Using \Cref{def:simdist}, we have $A_{P_U}=\Lambda$ and from \Cref{lm:pd} we know that there exists an $\alpha_{P_U}$ such that $\rhod{P_U}>0$ and $\rhos{P_U}>0,~\forall \alpha \in (0,\alpha_{P_U})$.

\end{proof}

\begin{lemma}[Change of Basis]\label{lm:cb}
Let $P$ be a distribution over $\R^d\times \R^{\dcd}$ as in \Cref{assmp:lsa} and let $U$ be chosen according to \Cref{lm:hur}. Define $\gamma_t\eqdef U^{-1}\theta_t,\,,\gamma_*\eqdef U^{-1}\ts$, then
\begin{align}
\EE{\norm{\gamma_t-\gamma_*}^2}
\leq
\left(1+\frac2{\alpha\rhod{P_U}}\right)\frac{\norm{U^{-1}}^2}{\alpha \rhos{P_U}}
\left(\frac{\norm{\theta_0-\ts}^2}{(t+1)^2}+ \frac{\alpha^2(\sigma_P^2\norm{\ts}^2+\sigma_b^2)+\alpha (\sigma_P^2\norm{\ts})\norm{\theta_0-\ts}}{t+1} \right)\,.
\end{align}
where $\gh_t=\frac{1}{t+1}\sum_{s=0}^t \gamma_s$.
\end{lemma}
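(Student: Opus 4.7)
The plan is to reduce the Hurwitz case to the already-proved positive-definite case (Theorem~\ref{th:pdrate}) via a similarity transformation. Pre-multiplying the LSA update \eqref{conststep} by $U^{-1}$ and inserting $U U^{-1} = \I$ before $\theta_{t-1}$, one gets
\begin{align*}
\gamma_t = \gamma_{t-1} + \alpha\bigl(U^{-1}b_t - (U^{-1}A_t U)\gamma_{t-1}\bigr),
\end{align*}
which is precisely an LSA iteration driven by the data $(U^{-1}b_t, U^{-1}A_t U) \sim P_U$ in the sense of \cref{def:simdist}. Its fixed point is $\gamma_* = U^{-1}\ts$, since $A_{P_U} \gamma_* = U^{-1} A_P U U^{-1} \ts = U^{-1} b_P = b_{P_U}$. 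The PR-average of $(\gamma_t)$ is $\gh_t = U^{-1}\thh_t$.

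The next step is to verify that \cref{assmp:lsacmplx} holds for the transformed problem: the \iid property and integrability are preserved under the fixed linear map $(b,A)\mapsto(U^{-1}b,U^{-1}AU)$; the martingale differences become $\tilde M_t = U^{-1}M_tU$ and $\tilde N_t = U^{-1}N_t$, which are still martingale differences with respect to $\F_{t-1}$; and by the construction of $U$ in \cref{lm:hur}, $A_{P_U} = \Lambda$ is positive definite and $\rhod{P_U}>0$, $\rhos{P_U}>0$ for all $\alpha \in (0,\alpha_{P_U})$. Hence \cref{th:pdrate} applies to the $\gamma_t$ recursion and yields
\begin{align*}
\EE{\norm{\gh_t-\gamma_*}^2}
\leq \Bigl(1+\tfrac{2}{\alpha\rhod{P_U}}\Bigr)\tfrac{1}{\alpha\rhos{P_U}}
\Bigl(\tfrac{\norm{\gamma_0-\gamma_*}^2}{(t+1)^2}+\tfrac{\alpha^2\tilde\sigma_1^2 + \alpha\tilde\sigma_2^2\norm{\gamma_0-\gamma_*}}{t+1}\Bigr),
\end{align*}
where $\tilde\sigma_1^2, \tilde\sigma_2^2$ are the analogues of the constants in \cref{def:err} for the transformed problem.

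The final step is to translate everything back to original-coordinate quantities. For the initial error this is immediate: $\norm{\gamma_0-\gamma_*} = \norm{U^{-1}(\theta_0-\ts)} \leq \norm{U^{-1}}\norm{\theta_0-\ts}$. For the noise terms, $\EE{\norm{\tilde N_t}^2 \mid \F_{t-1}} \leq \norm{U^{-1}}^2 \sigma_b^2$ and $\EE{\norm{\tilde M_t}^2 \mid \F_{t-1}} \leq \norm{U^{-1}}^2\norm{U}^2 \sigma_A^2$, while $\norm{\gamma_*} \leq \norm{U^{-1}}\norm{\ts}$. Substituting these bounds gives exactly the inequality in the statement, with the common factor $\norm{U^{-1}}^2$ (or, more accurately, a power of it together with a possible $\norm{U}^2$ from the $\tilde M_t$ bound) pulled out of the braces.

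The main obstacle is purely bookkeeping: ensuring that the various $\norm{U^{-1}}$ and $\norm{U}$ factors arising from transforming $\theta_0-\ts$, $\ts$, $M_t$, and $N_t$ combine cleanly into the single prefactor $\norm{U^{-1}}^2$ appearing in the stated bound. If the transformation of $M_t$ introduces an extra $\norm{U}^2$, the stated constant should be read as $\cond{U}^2$ rather than $\norm{U^{-1}}^2$, matching the $\nu$ in \cref{th:rate}; no new ideas are needed beyond consistent accounting.
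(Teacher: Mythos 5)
Your overall route coincides with the paper's: pre-multiply the recursion by $U^{-1}$, observe that $(\gamma_t)$ is an LSA driven by $P_U$ with fixed point $\gamma_*=U^{-1}\ts$, invoke the positive-definite case (\cref{th:pdrate}), whose applicability is guaranteed by the choice of $U$ in \cref{lm:hur}, and then translate the constants back. The gap is in the constant accounting, and it is precisely the point you flag but do not resolve: if you feed \cref{th:pdrate} with the transformed martingale differences $\tilde M_t=U^{-1}M_tU$ and $\tilde N_t=U^{-1}N_t$ bounded separately, the $\sigma_A^2$-terms pick up $\norm{U^{-1}}^2\norm{U}^2=\cond{U}^2$ (on top of the $\norm{U^{-1}}$ factors from $\norm{\gamma_*}\le\norm{U^{-1}}\norm{\ts}$ and $\norm{\gamma_0-\gamma_*}\le\norm{U^{-1}}\norm{\theta_0-\ts}$), so what you obtain has the noise contribution inflated by $\cond{U}^2$ beyond a single global $\norm{U^{-1}}^2$. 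That is strictly weaker than the lemma, whose point is exactly that only $\norm{U^{-1}}^2$ appears, with the bracket $\alpha^2(\sigma_A^2\norm{\ts}^2+\sigma_b^2)+\alpha\,\sigma_A^2\norm{\ts}\,\norm{\theta_0-\ts}$ unchanged; your fallback of ``reading the constant as $\cond{U}^2$'' amends the statement rather than proving it (the $\cond{U}^2$ is meant to arise only later, in \cref{th:rate}, from the back-substitution $\theta_t=U\gamma_t$).

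The missing observation, which is how the paper's proof gets the stated constant, is that what enters \cref{th:pdrate} is not $\tilde M_t$ and $\tilde N_t$ individually but the combined noise of the error recursion. Writing $z_t\eqdef\gamma_t-\gamma_*$ one gets $z_t=(I-\alpha\Lambda_t)z_{t-1}+\alpha H_t$ with $\Lambda_t=U^{-1}A_tU$ and $H_t=U^{-1}\zeta_t$, because $\tilde N_t-\tilde M_t\gamma_*=U^{-1}N_t-U^{-1}M_tU\,U^{-1}\ts=U^{-1}(N_t-M_t\ts)=U^{-1}\zeta_t$: the inner $U\,U^{-1}$ cancels. The same cancellation occurs in the cross term, $\tilde M_t H_t=U^{-1}M_tU\,U^{-1}\zeta_t=U^{-1}M_t\zeta_t$, so the analogues of the constants of \cref{def:err} satisfy $\EE{\norm{H_t}^2}\le\norm{U^{-1}}^2\sigma_1^2$ and $\EE{\norm{\tilde M_t H_t}}\le\norm{U^{-1}}\sigma_2^2$; no $\norm{U}$ ever appears. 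Plugging these and $\norm{z_0}\le\norm{U^{-1}}\norm{e_0}$ into \cref{th:pdrate} yields exactly the lemma's bound with the single prefactor $\norm{U^{-1}}^2$. With this one cancellation your plan goes through and is the paper's proof; without it, the stated inequality is not established.
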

\begin{proof}
Consider the modified error recursion in terms of $z_t\eqdef \gamma_t-\gamma_*$
\begin{align}\label{eq:newerrrec}
\begin{split}
e_t&=(I-\alpha A_t)e_{t-1}+\alpha\zeta_t\\
U^{-1}e_t&=(I-\alpha U^{-1}A_t U) U^{-1}e_{t-1}+ \alpha U^{-1}\zeta_t\\
z_t&=(I-\alpha \Lambda_t) z_{t-1}+\alpha H_t,
\end{split}
\end{align}
where  $\Lambda_t=U^{-1}A_t U$ and $H_t=U^{-1}\zeta_t$. Note that the error recursion in $z_t$ might involve complex computations (depending on whether $U$ has complex entries or not), and hence \eqref{eq:lsacmplx} and \Cref{assmp:lsacmplx} are useful in analyzing $z_t$.
We know that $\EE{\norm{H_t}^2}\leq \norm{U^{-1}}^2\EE{\norm{\zeta_t}}$ and $\EE{\norm{\Lambda_t H_t}}=\EE{\norm{U^{-1}A_t UU^{-1}\zeta_t}}=\EE{\norm{U^{-1}A_t \zeta_t}}\leq \norm{U^{-1}}\EE{\norm{A_t\zeta_t}}=\norm{U^{-1}}\sigma_2^2$. Now applying \Cref{th:pdrate} to $\hat{z}_t\eqdef \frac{1}{t+1}\sum_{s=0}^t z_t$, we have
\begin{align}
\E[\norm{\zh_t}^2]
&\leq \left(1+\frac2{\alpha\rhod{P_U}}\right)\, \frac1{\alpha\rhos{P_U}}\, \,
\left(\frac{\norm{z_0}^2}{(t+1)^2}+ \frac{\alpha^2(\norm{U^{-1}}^2\sigma_1^2)+\alpha (\norm{U^{-1}}\sigma_2^2)\norm{z_0}}{t+1} \right)\,\\
&\leq \left(1+\frac2{\alpha\rhod{P_U}}\right)\, \frac1{\alpha\rhos{P_U}}\, \,
\left(\frac{\norm{U^{-1}}^2\norm{e_0}^2}{(t+1)^2}+ \frac{\alpha^2(\norm{U^{-1}}^2\sigma_1^2)+\alpha (\norm{U^{-1}}\sigma_2^2)\norm{U^{-1}}\norm{e_0}}{t+1} \right)\,
\end{align}

\end{proof}

\paragraph{Proof of \Cref{th:rate}}
Follows by substituting $\theta_t=U\gamma_t$ in \Cref{lm:cb}.
\paragraph{Proof of \Cref{th:lb}}

Consider the LSA with $(b_t,A_t)\sim P$ such that $b_t=(N_t,0)^\top\in\R^2$ is a zero mean \iid random variable with variance $\sigma^2_b$, and $A_t=A,\,\forall t\geq 0$, where $A=A_P=\begin{bmatrix} \lambda_{\min} &0\\ 0& \lambda_{\max}\end{bmatrix}$, for some $\lambda_{\max}>\lambda_{\min}>0$. Note that in this example $\ts=0$.
By choosing $\alpha<\frac2{\lambda_{\max}}$, in this case it is straightforward to write the expression for $\eh_t$ explicitly as below:
\begin{align*}
\eh_t&=\frac{1}{t+1}\sum_{s=0}^t e_t = \frac{1}{t+1}\sum_{s=0}^t (I-\alpha A_P)^{t-s} e_0 + \sum_{s=1}^t \sum_{i=s}^t (I-\alpha A_P)^{i-s} b_s\\
&=\frac{1}{t+1}(\alpha A_P)^{-1}\left[\left(I-(I-\alpha A_P)^{t+1}\right)e_0 + \sum_{s=1}^t \left(I-(I-\alpha A_P)^{t+1-s}\right) b_s\right]\,.
\end{align*}
Thus,
\begin{align*}
\EE{\norm{\eh_t}^2}&\stackrel{(a)}{=}\frac{1}{(t+1)^2}\Big[\norm{(\alpha A_P)^{-1}\left(I-(I-\alpha A_P)^{t+1}\right)e_0}^2 \\ 
&+\sum_{s=1}^t \norm{(\alpha A_P)^{-1}\left(I-(I-\alpha A_P)^{t+1-s}\right)b_s}^2\Big]\,,
\end{align*}
and hence
\begin{align*}
\EE{\norm{\eh_t}^2}
& \geq \EE{\eh^2_t(1)}\stackrel{(b)}{=}\frac{1}{(t+1)^2}(\alpha \lambda_{\min})^{-2}\Big[\left(1-(1-\alpha \lambda_{\min})^{t+1}\right)^2 \theta^2_0(1)\\
& + \frac{1}{(t+1)^2}\sum_{s=1}^t\left(1-(1-\alpha \lambda_{\min})^{t+1-s}\right)^2 b^2_s(1) \Big]\,.
\end{align*}
Here $(a)$ and $(b)$ follows from the \iid assumption. Note that in this example, $\rhos{P}=\rhod{P}=2\lambda_{\min} -\alpha \lambda_{\min}^2=\lambda_{\min}(2-\alpha \lambda_{\min})$, and $\norm{\ts}=0$ and $\sigma^2_A=0$. Further, the result follows by noting the fact that noting the fact that $\norm{b_t}^2=b_t(1)^2$ and $\norm{\theta_t}^2=\theta_t(1)^2$.

\paragraph{Proof of \Cref{lm:notwad}}
Fix an arbitrary $\alpha>0$. We show that there exists $P\in \P$ such that $\rho_\alpha(P)<0$.
For $\epsilon \in (0,1/2)$ let $P=(P^V,P^M)$ be the distribution such that $P^M$ is supported on $\{-I,I\}$ and takes on the value of $I$ with probability $1/2+\epsilon$. Then $A_P = 2\epsilon I \succ 0$, hence $P\in \P_1$. Further, $Q_P = I$.
Hence, $\rhos{P} = 4\epsilon-\alpha$. Hence, if $\epsilon<\alpha/4$, $\rho_\alpha(P)<0$.

\paragraph{Proof of \Cref{lm:ppsdbwd}}
Since $\P_{\text{PSD},B}$ is supported on the set of positive semi-definite matrices, we know for any $A\in \R^{\dcd}$ that is PSD, we can consider the SVD of $A$: $A = U \Lambda U^\top$ where $U$ is orthonormal and $\Lambda$ is diagonal with
nonnegative elements. Note that $\Lambda \preceq B\, \I$ and thus $\Lambda^2 \preceq B \Lambda$.
Then for any $x\in \R^d$, $x^\top A^\top A x = x^\top U \Lambda^2 U^\top x \le B x^\top U \Lambda U^\top x = B x^\top A x$.
Taking expectations we find that $x^\top C_P x \le B x^\top A_P x$.
Hence, $\rhos{P_{\text{PSD},B}} = 2 x^\top A_P x - \alpha x^\top C_P x \ge (2- \alpha B ) \,x^\top A_P x $.
Thus, for any $\alpha<2/B$, $\rho_\alpha(P)>0$.

\paragraph{Proof of \Cref{lm:ppsdbna}}
Consider the case when the smallest eigenvalue of $A_P$ is $0$.

\end{document}